\def\eqref#1{equation~\ref{#1}}
\def\1{\bm{1}}
\DeclareMathAlphabet{\mathsfit}{\encodingdefault}{\sfdefault}{m}{sl}
\SetMathAlphabet{\mathsfit}{bold}{\encodingdefault}{\sfdefault}{bx}{n}
\theoremstyle{plain}
\newtheorem{theorem}{Theorem}[section]
\theoremstyle{plain}
\newtheorem{problem}[theorem]{Problem}
\theoremstyle{definition}
\newtheorem{definition}[theorem]{Definition}
\theoremstyle{remark}
\title{Graph Neural Networks Are More Than Filters: Revisiting and Benchmarking from A Spectral Perspective}
\author{Yushun Dong$^{\dagger}$, Patrick Soga$^{\ddagger}$, Yinhan He$^{\ddagger}$, Song Wang$^{\ddagger}$, Jundong Li$^{\ddagger}$ \\
$^{\dagger}$Florida State University \\
$^{\ddagger}$University of Virginia \\
\texttt{yushun.dong@fsu.edu, \{zqe3cg,nee7ne,sw3wv,jundong\}@virginia.edu}
}
\begin{document}

\linespread{0.9}

\maketitle

\begin{abstract}
Graph Neural Networks (GNNs) have achieved remarkable success in various graph-based learning tasks. While their performance is often attributed to the powerful neighborhood aggregation mechanism, recent studies suggest that other components such as non-linear layers may also significantly affecting how GNNs process the input graph data in the spectral domain. Such evidence challenges the prevalent opinion that neighborhood aggregation mechanisms dominate the behavioral characteristics of GNNs in the spectral domain. To demystify such a conflict, this paper introduces a comprehensive benchmark to measure and evaluate GNNs' capability in capturing and leveraging the information encoded in different frequency components of the input graph data. Specifically, we first conduct an exploratory study demonstrating that GNNs can flexibly yield outputs with diverse frequency components even when certain frequencies are absent or filtered out from the input graph data. We then formulate a novel research problem of measuring and benchmarking the performance of GNNs from a spectral perspective. To take an initial step towards a comprehensive benchmark, we design an evaluation protocol supported by comprehensive theoretical analysis. Finally, we introduce a comprehensive benchmark on real-world datasets, revealing insights that challenge prevalent opinions from a spectral perspective. We believe that our findings will open new avenues for future advancements in this area. Our implementations can be found at: \url{https://github.com/yushundong/Spectral-benchmark}.
\end{abstract}

\section{Introduction}
\label{intro}


Graph Neural Networks (GNNs) have shown remarkable performances in modeling graphs in a plethora of domains, such as social media analysis~\citep{fan2019graph, ying2018graph}, molecular biology~\citep{ wang2022molecular, liu2022spherical, gasteiger2021gemnet}, and cybersecurity~\citep{jin2020graph, zhang2020gnnguard, tang2022rethinking}, to name a few. The huge success of GNNs is generally attributed to its powerful neighborhood aggregation mechanism~\citep{xu2018powerful, zhu2020beyond}. Specifically, such a mechanism allows each node to contribute key information to its neighbors across the graph topology~\citep{liu2022introduction}, which enables GNNs to learn informative representations and perform accurate predictions in graph-based learning tasks~\citep{wu2022graph}. 


To gain a deeper understanding of the reason why such a neighborhood aggregation mechanism brings revolutionary performance improvement, recent years have witnessed a huge amount of explorations~\citep{jegelka2022theory, xu2018powerful}. Currently, a widely acknowledged belief is that neighborhood aggregation mechanism acts as a graph signal filter~\citep{luan2024graph}, which serves as the dominant 
module in GNNs~\citep{wang2022powerful, bianchi2021graph}. In most traditional GNNs, such a mechanism acts as a low-passing filter~\citep{chang2021not, nt2019revisiting} to capture the frequency components encoded with the most task-relevant information in most graph datasets.
More recent studies such as ~\citep{bo2021beyond, bianchi2021graph,luan2022revisiting}, have noticed that the most task-relevant information is not necessarily encoded in the lowest frequencies, e.g., in heterophilous graphs~\citep{zheng2022graph, zhu2021graph}. To facilitate more capable GNNs to handle different types of graphs, a variety of complicated neighborhood aggregation mechanisms have been designed to equip GNNs with more flexible filters~\citep{li2018adaptive, guo2023graph}, aiming at capturing more task-relevant information from different frequency components across the spectral domain.
Nevertheless, a significant flaw arises when we zoom in on these works. Specifically, the motivation of designing more flexible filters is implicitly based on the assumption that \textit{it is difficult for GNNs to yield outputs with abundant frequency components if these components are significantly weakened or filtered out by the neighborhood aggregation mechanism}. However, GNNs are more than filters associated with neighborhood aggregation mechanisms. Other modules, e.g., non-linear layers, are often included as well. This fact naturally undermines the validity of this assumption. Considering such a gap, in this work, we ask:
%
\begin{center}
\vspace{-1.5mm}
\emph{\textbf{When such filters are fixed, can GNNs still flexibly output different frequency components?}} 
\vspace{-1.5mm}
\end{center}
%
%
%
The above question is critical since its answer determines whether we should attribute most strengths and weaknesses of GNNs from a spectral perspective to such filters or not.
%
%
%
%
Despite the scarcity of existing explorations, several studies have observed that non-linear layers can affect the frequency components of the GNNs' output~\citep{balcilar2021analyzing,yang2024does}.
%
These results imply that GNNs as a whole may exhibit different behavioral characteristics, e.g., the GNNs' output frequency components given a certain input frequency component, compared with the above-mentioned neighborhood aggregation mechanisms in the spectral domain.
However, most existing works fail to gain a comprehensive understanding of how the components other than those filters influence the behavior of GNNs in the spectral domain and finally affect the performance in graph learning tasks.

We note that it is non-trivial to properly answer the aforementioned question. In particular, we mainly face three fundamental challenges. 
(1) \textbf{Complex Analytical Expressions.} GNNs are usually highly complex when the components other than neighborhood aggregation mechanisms are considered all together. Therefore, its analytical expression can hardly be exploited to perform analysis in the spectral domain.
%
%
(2) \textbf{Lack of Frequency-Specific Incentives.} GNNs are typically supervised with a set of fixed ground-truth labels during training. However, these ground-truth labels are usually a composition of different frequency components and do not show clear incentive patterns preferring certain frequency components. Therefore, it becomes difficult to analyze whether GNNs are able to yield outputs with components that they have not previously observed.
(3) \textbf{Lack of Metric and Benchmark.} To the best of our knowledge, there is currently no metric that measures the flexibility of a GNN's output frequency components in the spectral domain. On the other hand, it is also necessary for us to understand the differences between different popular GNN models on the question above. However, no existing benchmark can comprehensively reveal such insights.
%

In this paper, we take an initial step to investigate the problems above. Specifically, we first perform an exploratory study, where we avoid characterizing any analytical expressions to tackle our first challenge.
Instead, we propose to design appropriate supervision information as the frequency-specific incentives for GNN training, such that the capacity of GNNs in yielding different frequency components can be exposed and observed across the frequency axis. This helps us to properly tackle the second challenge.
The observations of the exploratory study verify that the modules other than neighborhood aggregation can already enable GNNs to flexibly output different frequency components at different energy levels, which challenges the prevalent opinion that neighborhood aggregation typically dominates GNNs' behavioral characteristics in the spectral domain. Furthermore, we design evaluation protocols to evaluate the performance of different GNNs in yielding different frequency components across the spectral domain. We finally present a comprehensive benchmark under such protocols to introduce a comprehensive understanding on such a problem across different popular GNNs, which properly handles the third challenge.
We summarize our contributions below:
\begin{itemize}[left=0pt]
    \item \textbf{An Exploratory Study Challenging the Common Belief.} We propose a principled strategy to characterize the flexibility of GNNs to yield different frequency components. Surprisingly, we found that the filter resulted from the neighborhood aggregation does not dominate the behavioral characteristics of GNNs in the spectral domain, which disagrees with the prevalent opinion.
    \item \textbf{A Novel Research Problem.} We formulate a novel research problem of \emph{Measuring and Benchmarking the Performance of GNNs From A Spectral Perspective} and take an initial step towards properly handling it. We provide valuable insights through explorations towards understanding the behavioral characteristics of GNNs in the spectral domain.
    \item \textbf{A Comprehensive Benchmark with Novel Evaluation Protocol.} We design a novel evaluation protocol with solid theoretical basis and practical significance. We also conduct extensive experiments to enhance understanding of popular GNNs on real-world datasets. Notably, our benchmark provides a consistent view directly applicable to both spatial- and spectral-based GNNs.
\end{itemize}

\section{Preliminaries}
\label{pre}

\noindent \textbf{Notations.}
Throughout our work, without further specification, italic letters (e.g., $\mathcal{X}$), bold uppercase letters (e.g., $\mathbf{X}$), bold lowercase letters (e.g., $\textbf{x}$), and ordinary lowercase letters (e.g., $x$) represent matrices, vectors, and scalars, respectively. For any matrix, e.g., $\mathbf{X}$, we employ $\mathbf{X}_i$ and  $\mathbf{X}^i$ to indicate its $i$-th row and column, respectively.
We denote an undirected graph as $G=(\mathcal{V}, \mathcal{E})$, where $\mathcal{V}=\{v_1, ..., v_n\}$ and $\mathcal{E}$ are the set of nodes and edges. We denote $\mathbf{A}\in \mathbb{R}^{N\times N}$ as the graph adjacency matrix in which $\mathbf{A}_{i, j}=1$ if there exists an edge between node $i$ and node $j$, otherwise $\mathbf{A}_{i, j}=0$. With graph adjacency matrix $\mathbf{A}$, the graph node degree matrix can be defined as $\mathbf{D}=diag(d_1, ..., d_N)$, where $d_i=\Sigma_j{\mathbf{A}_{i,j}}$. The normalized graph Laplacian matrix is defined as $\mathbf{L}=\mathbf{I}-\mathbf{D}^{-\frac{1}{2}}\mathbf{A}\mathbf{D}^{-\frac{1}{2}}$. Additionally, we define the graph node feature matrix $\mathbf{X}\in\mathbb{R}^{N\times F}$, where $\mathbf{X}^{j}$ represents the $j$-th feature channel and $F$ denotes the number of feature channels. We employ the sign $\odot$ as the Hadamard multiplication.



\textbf{Current Progress of Gnns \& Concerns From a Spectral Perspective.}
There are two mainstream lines of research on GNNs, i.e., spatial- and spectral-based ones.
Researchers examining the spatial perspective consider the aggregation process of GNNs as a node attribute aggregator across the graph topology~\citep{kipf2016semi,wu2020comprehensive,xu2018powerful}. In contrast, those exploring GNNs from a spectral perspective consider the aggregation process as a filter in the spectral domain, i.e., the eigenspace of the graph Laplacian matrix $\mathbf{L}$,
largely considering GNNs as low-passing filters~\citep{nt2019revisiting,chang2021not, yu2020graph}. 
Recently, diverse designs of filters associated with different neighborhood aggregation mechanisms have been proposed to help GNNs capture the key information encoded in different frequency components~\citep{bo2021beyond, guo2023graph,dong2021adagnn}. 
%
In general, these explorations lay a solid mathematical foundation for GNN with spectral-based methods. However, most current methods focus on the filters associated with the neighborhood aggregation mechanisms rather than considering a GNN as a whole. As such, the role of other modules such as non-linear layers in GNNs has been long neglected. 
In recent years, several works~\citep{balcilar2021analyzing,yang2024does} have provided primary evidence that the non-linear layers can effectively shift the behavioral characteristics of GNNs.
To further bridge the aforementioned gap, we now formally introduce the problem of \emph{Measuring and Benchmarking the Performance of GNNs From A Spectral Perspective} below.

\begin{problem}
\label{p1}
\textbf{\emph{Measuring and Benchmarking the Performance of GNNs From A Spectral Perspective.}} Our goal is to qualitatively understand and quantitatively compare the capabilities of GNNs in capturing the key information encoded in different frequency components of input graphs to perform graph learning tasks.
\end{problem}

\section{An Exploratory Study}
\label{explo}

To properly handle Problem~\ref{p1}, the prevalent strategy is to simply analyze the frequency response function associated with the filter resulted from the neighborhood aggregation mechanism~\citep{nt2019revisiting,wu2019simplifying}. However, such a straightforward approach may not be able to handle Problem~\ref{p1}, since the behavioral characteristics of GNNs in the spectral domain may not be fully dominated by such filters. Below, we show our preliminary explorations to further clarify such a common misunderstanding.

\noindent \textbf{Research Question.}
Here, we perform preliminary studies to explore whether the behavioral characteristics of GNNs in the spectral domain are dominated by the filters associated with the neighborhood aggregation mechanism (as discussed in Section~\ref{intro}).
%
Specifically, we are particularly interested in revealing whether GNNs can flexibly yield outputs with abundant frequency components that have been significantly weakened or filtered out by the non-learnable filter.

\noindent \textbf{Evaluation Protocol.} In this study, we measure the influence of different frequency components using the commonly adopted notion of \textit{Energy}~\citep{yang2022new, tang2022rethinking}. We construct our experimental datasets based on real-world graph datasets. First, we compute the normalized Laplacian eigenvectors of the graph and sort them by eigenvalue (i.e., frequency). We then bin these eigenvectors into even-width bins, where each bin is associated with the mean of the eigenvectors falling into that bin. The bottom, middle, and upper one third of bins are designated as low-, middle-, and high-frequency components.
We propose to conduct node-level regression tasks to answer our research question above. Specifically, we set the target as the mean of the eigenvectors coming from one frequency component, while the input features are the mean of eigenvectors in a different frequency range (e.g., low-frequency eigenvectors when the target is high-frequency). In this way, the input graph signal will have zero energy levels for the frequency components of the prediction target, while the ground truth can serve as frequency-specific incentives. This simulates the scenarios where \textit{certain frequency components are absent in the input or filtered out by the neighborhood aggregation mechanism}.
We note that graph filters cannot generate frequency components that are not originally contained in the input graph data. Therefore, if a GNN model can still produce outputs with abundant target frequency components (in terms of energy levels) under various conditions, it demonstrates that the neighborhood aggregation mechanism does not necessarily dominate the output frequency components. In other words, GNNs can recover missing frequency components if it benefits the optimization goal during training, even if these components are/become absent in the forwarding process.
We adopt GCN as our GNN model and conduct experiments on five real-world datasets. We train our models using MSE loss after standardizing the input features. We show two exemplary cases based on \textit{Co-author CS} dataset in Figure~\ref{pre_exp_main}, where the energy distribution of the final GNN's output signal alongside those of the input and target signals are visualized. We present complete results in the Appendix.

\begin{figure*}[!t]
\centering
    \hspace{10mm}\includegraphics[width=0.65\linewidth]{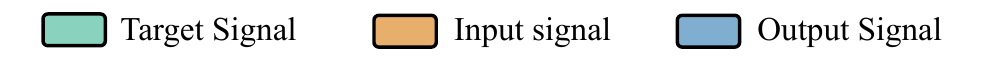} \\
            \begin{subfigure}[t]{0.45\textwidth}
        \small
        \includegraphics[width=0.9\textwidth]{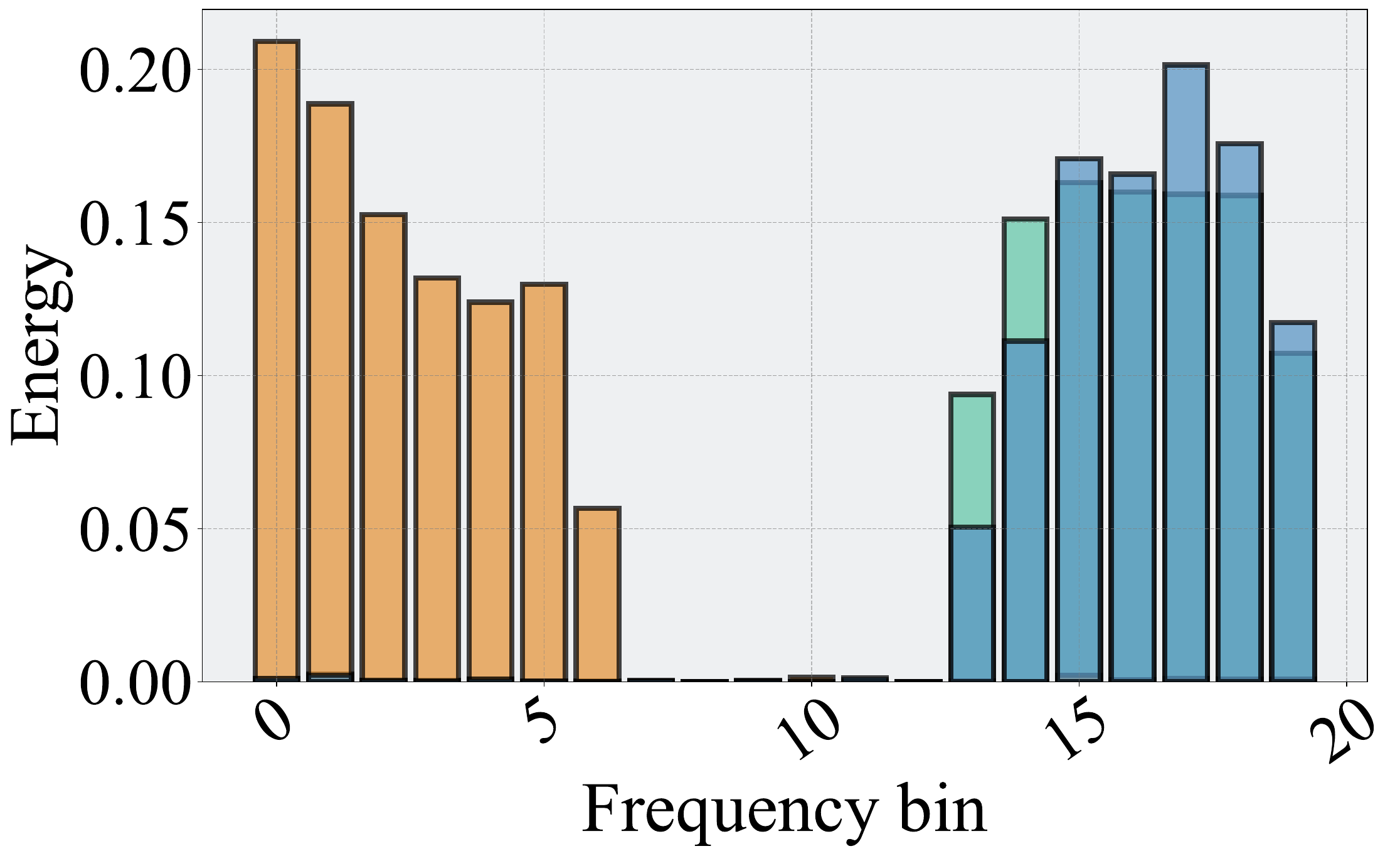}
        \vspace{-1mm}
            \caption[Network2]%
            {{Low-frequency input; high-frequency targets.}}   
            \label{pre_exp1}
        \end{subfigure} 
        \hspace{5mm}
        \begin{subfigure}[t]{0.45\textwidth}
        \small
        \includegraphics[width=0.9\textwidth]{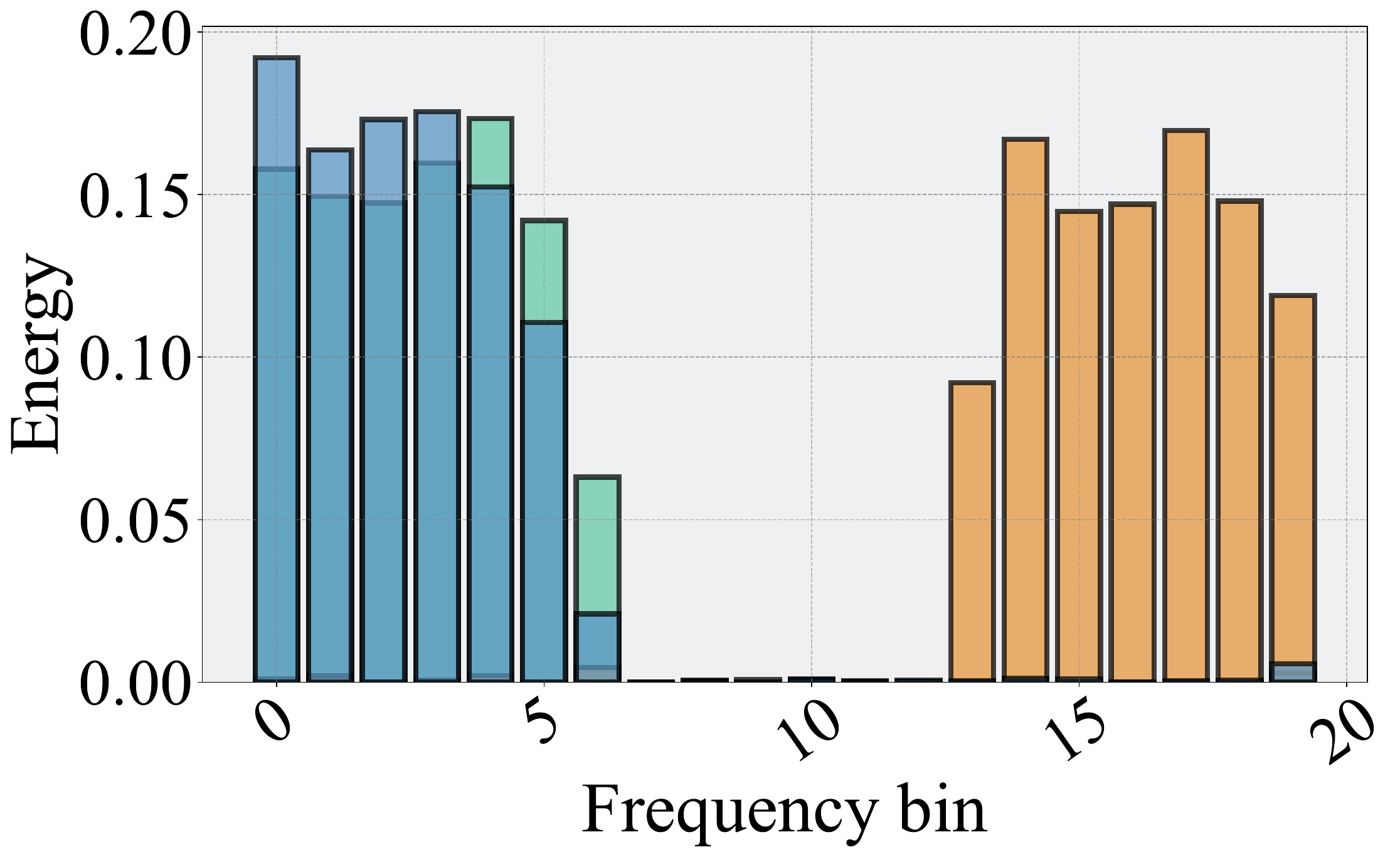}
        \vspace{-1mm}
            \caption[Network2]%
            {{High-frequency input; low-frequency targets.}} 
            \label{pre_exp2}
        \end{subfigure}
            \vspace{-2mm} 
    \caption{A comparison between the energy of input and output frequency components of GCN on \textit{Co-author CS} dataset. The results show that the output frequency components can always flexibly align with the target distribution in both cases of (a) inputting low-frequency components only but aiming to output high-frequency components; and (b) inputting high-frequency components only but aiming to output low-frequency components.}
    \vspace{-5mm} 
    \label{pre_exp_main}
\end{figure*}

\noindent \textbf{Observations \& Analysis.}
Surprisingly, our experiments reveal that GNNs demonstrate remarkable capability to recover frequency components across diverse scenarios. As shown in Figure~\ref{pre_exp_main}, even when certain frequency components are filtered out or significantly weakened in the input, GNNs can still flexibly align their output with the target distribution. This is evident in both cases: (a) where low-frequency inputs generate high-frequency outputs, and (b) where high-frequency inputs produce low-frequency outputs.
Notably, this indicates that other modules, such as the non-linear layers, in GNNs can also play a crucial role in altering the energy distributions across different frequency components.
Our preliminary conclusion is that the behavior of GCN in the spectral domain is not necessarily dominated by the spectral characteristics of its neighborhood aggregator. Instead, GCN can easily output different frequency components when appropriate frequency-specific incentives exist in the supervision signal. Such flexibility in yielding various frequency components is even comparable to those state-of-the-art GNNs with carefully designed learnable filters~\citep{bo2021beyond}, and we show a more comprehensive comparison in Appendix~\ref{pre_study_complementary}.

To summarize, we conclude that the frequency response of the filter of a neighborhood aggregator does not necessarily determine the behavioral characteristics of its host GNN in the frequency domain. In fact, GNNs as a whole have the capability to significantly modify their output energy distribution compared with the input energy distribution in the spectral domain.
Such an insight suggests that only analyzing the flexibility of the filters associated with the neighborhood aggregation mechanism is insufficient to fully explain the strengths and weaknesses of GNNs, which is in conflict with the traditional belief.
Taking a step further, a critical question emerges: How can we effectively measure the general "capturing and altering" capability of GNNs in the frequency domain? This question becomes paramount for understanding the true potential and limitations of current GNNs. We address this question in the following section where we aim to provide a comprehensive evaluation protocol for benchmarking GNN performance from a spectral perspective.

\section{Benchmark Design}
\label{design}

In this section, we introduce the benchmark design. Specifically, we first introduce the evaluation protocol, followed by comprehensive analysis to lay a solid theoretical foundation which will directly support the practical significance of our proposed benchmark from a spectral perspective.

\subsection{Experimental Protocol}\label{subsec: exp_protocol}

\noindent \textbf{Downstream Task \& Dataset Preparations.}
While we used node regression for our preliminary study in Section~\ref{explo}, we adopt node classification for our extensive empirical benchmark considering its superior practical significance in graph learning tasks. Specifically, we propose to adopt the same approach to generate continuous values corresponding to each node in the input graph (as in Section~\ref{explo}) followed by an additional discretization process by giving thresholds to determine the ground truth labels for each node. We show in Section~\ref{theoretical} that the additional discretization process only brings an upper-bounded energy distribution deviation compared with the continuous ground truth values in the node-level regression task, which ensures satisfying consistency.
Meanwhile, such an approach ensures that the targets to predict possess the frequency-specific incentives needed to evaluate performance on each bin in the frequency domain.
We propose to adopt real-world datasets such that we will perform evaluations on the node attributes and graph topology that bear practical significance across different domains. 
%
We further provide more details in Appendix~\ref{app_repro}.

\noindent \textbf{GNNs for Benchmarking.}
We conduct evaluation on a total number of 14 GNNs, namely SAGE~\citep{hamilton2017inductive}, GCN~\citep{kipf2016semi}, GCNII~\citep{chenWHDL2020gcnii}, GAT~\citep{velickovic2018graph}, GATv2~\citep{brody2022how}, SGC~\citep{wu2019simplifying}, FA~\citep{bo2021beyond}, GIN~\citep{xu2018powerful}, ChebNet~\citep{defferrard2016convolutional}, GatedGraph~\citep{li2016gated}, the Transformer~\citep{vaswani2017attention}, GPS~\citep{rampasek2022GPS}, APPNP~\citep{gasteiger2018combining}, and the 1-WL operator from~\citet{morris2019graph} (denoted as 1-GNN). These models cover a wide range of popular and state-of-the-art GNNs designed in either spatial or spectral domain.

\noindent \textbf{Real-World Datasets.}
We benchmark GNNs on the full versions of the Cora and DBLP citation graphs~\citep{bojchevski2018deep}, the CS and Physics coauthor datasets by~\citet{shchur2018pitfalls}, and the Amazon-Computers and Amazon-Photo product graphs by~\citep{shchur2018pitfalls}. These datasets vary in size, structure, spectral energy distribution, and semantic domains, allowing us to evaluate the GNNs' performance across diverse graph types and application areas.

\noindent \textbf{Benchmark Evaluation Metrics.}
Following most other works, we adopt node classification accuracy as the primary metric to measure the performance of GNNs. On the basis of this, we now introduce the qualitative and quantitative performance evaluation methods. From a qualitative perspective, we propose the notion of \textit{Accuracy Curve in the Spectral Domain}. Specifically, as introduced in Section~\ref{explo}, each round of experiments is associated with a bin on the spectral axis, based on which the node-level prediction target (a discrete label for node classification) is generated. When the input graph signal contains all frequency components at the same energy level, the performance under this bin generally reflects the capability of such a GNN model in capturing and leveraging the information encoded in the associated frequency component to perform prediction. Accordingly, the performance across all available bins on the spectral axis form a curve, which generally reflects the tendency of how such capability changes w.r.t. the frequency value. From a quantitative perspective, we propose to utilize the \textit{Normalized Area Under the Accuracy Curve}, Normalized AUAC, to measure the general capability of each GNN model in capturing and leveraging the information encoded in different frequency components. Specifically, it is calculated as the division between the AUAC under the full frequency range and the largest possible AUAC under the full frequency range.
Additionally, we are also interested in the capability of GNNs in capturing the information encoded in a certain range of frequency components. In this case, Normalized AUAC can also be adopted by specifying a particular range of frequencies.


\noindent \textbf{Implementation Details.}
All experiments were conducted using PyTorch~\citep{paszke2017automatic} and PyTorch Geometric~\citep{Fey/Lenssen/2019} libraries. We used 2-layer GNNs with a hidden dimension of 64 for all runs. GNNs were trained for 500 epochs using the Adam optimizer with a learning rate of 0.001. We bin each frequency component with a width of 0.1. No learning rate scheduler or early stopping was used. For each dataset and spectral bin, we report the mean and standard deviation of the results across 3 runs. More implementation details are in Appendix \ref{app_repro}.


\subsection{Theoretical Analysis}
\label{theoretical}

We adopt node classification task for our main benchmark considering that it typically fosters a stronger practical significance. To ensure the consistency between node-level regression task and node classification tasks, in this section, we aim to reveal that the additional discretization process in node classification does not bring significant deviation from the ground truth's energy distribution in the spectral domain. Below we present the theoretical analysis revealing such insights.




We refer to the matrix of one-hot vectors representing the ground truth node class labels after discritization as the \textit{Node Class Label} (NCL) matrix for convenience. Below we first define the \textit{Energy Distribution Field} (EDF) of a node class label distribution.
%
\begin{definition}
    (Energy Distribution Field) The energy distribution field, denoted as $\mathcal{F}_{\mathbf{M}}$, of an NCL matrix $\mathbf{M}\in \mathbb{R}^{n\times k}$ is the set of energy distributions of the unit vectors whose corresponding NCL matrix is $\mathbf{M}$. In other words, $\mathcal{F}_{\mathbf{M}}:=\{e(v)\in\mathbb{R}^{n\times 1}|||v||_2=1, \tau(v)=\mathbf{M}\}$, where $e(\cdot)$ is the energy distribution function mapping a vector to its energy distribution in the graph spectrum field, $\tau(\cdot)$ is the function mapping unit vectors to their NCL matrices. 
\end{definition}

With the concept of EDF, we then formulate the ``closeness" of the energy distribution between the eigenvector $v$ of the graph Laplacian and its corresponding NCL matrix $\tau(v)$ as $\max_{e_u\in \mathcal{F}_{\tau(v)}}{||e_u-e(v)||_2}$. We take two steps to verify that the optimal value of the maximization problem is small enough: \textbf{\textit{(i)}} We prove that the energy distribution function is Lipschitz, which indicates that the energy distribution function is a ``smooth'' function where quantifying its function value variations is equivalent to quantifying its variable variations; \textbf{\textit{(ii)}} The inverse image of the energy distribution function $e(\cdot)$ on any energy distribution field $\mathcal{F}_{\mathbf{M}}$ is small enough. 

For the first step, since the energy distribution function is essentially a linear function followed by a vector normalization procedure, we can write the energy distribution function as $e(v)=\frac{(\mathbf{A}v)\odot (\mathbf{A}v)}{||\mathbf{A}v||^2}$ with $\mathbf{A}$ being an orthonormal matrix. Actually, the energy distribution function is Lipschitz:
\begin{theorem}\label{thm: energy_Lipschitz}
    The energy distribution function $e(v)=\frac{(\mathbf{A}v)\odot (\mathbf{A}v)}{||\mathbf{A}v||^2}$, with $\mathbf{A}$ being orthonormal, is Lipschitz on the unit sphere.
\end{theorem}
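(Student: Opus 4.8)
The plan is to exploit the orthonormality of $\mathbf{A}$ to eliminate the denominator entirely on the unit sphere, reducing the claim to the Lipschitz property of componentwise squaring. First I would observe that for any unit vector $v$ we have $\|\mathbf{A}v\|^2 = v^\top \mathbf{A}^\top \mathbf{A} v = \|v\|^2 = 1$, so the normalizing factor is constant and $e(v) = (\mathbf{A}v)\odot(\mathbf{A}v)$ on the sphere. This is the crucial simplification: the apparently awkward quotient disappears, and what remains is the composition of a linear orthonormal map $v \mapsto \mathbf{A}v$ with the componentwise squaring map $w \mapsto w\odot w$.

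The orthonormal map is $1$-Lipschitz, since $\|\mathbf{A}(v-v')\| = \|v-v'\|$, and it sends the unit sphere into itself. So it suffices to show the squaring map is Lipschitz on unit vectors. I would write, for unit vectors $w, u$,
\[
\|w\odot w - u\odot u\|^2 = \sum_i (w_i^2 - u_i^2)^2 = \sum_i (w_i - u_i)^2 (w_i + u_i)^2 .
\]
Since $\|w\| = \|u\| = 1$ forces $|w_i|, |u_i| \le 1$ and hence $(w_i + u_i)^2 \le 4$, the right-hand side is bounded by $4\sum_i (w_i - u_i)^2 = 4\|w-u\|^2$, giving that squaring is $2$-Lipschitz on the sphere.

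Composing the two estimates yields $\|e(v) - e(v')\| \le 2\|\mathbf{A}v - \mathbf{A}v'\| = 2\|v - v'\|$, so $e$ is $2$-Lipschitz on the unit sphere. There is no serious obstacle here once the denominator is recognized to be constant; the only point requiring a moment's care is the squaring estimate, where the factorization $w_i^2 - u_i^2 = (w_i - u_i)(w_i + u_i)$ together with the a priori bound $|w_i|, |u_i| \le 1$ (itself a consequence of restricting to the sphere) does all the work. I would also note that the resulting Lipschitz constant is dimension-independent, which is precisely what makes the subsequent step of bounding the deviation over the inverse image $\mathcal{F}_{\mathbf{M}}$ meaningful.
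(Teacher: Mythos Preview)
Your proof is correct and follows essentially the same route as the paper: both eliminate the denominator via $\|\mathbf{A}v\|=1$, factor $w_i^2-u_i^2=(w_i-u_i)(w_i+u_i)$ componentwise, and bound $|w_i+u_i|\le 2$ using that unit vectors have coordinates in $[-1,1]$, arriving at the same Lipschitz constant $2$. Your presentation is slightly more modular in explicitly splitting the map into the isometry $v\mapsto\mathbf{A}v$ composed with componentwise squaring, but the substance is identical.
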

For the second step, we prove the inverse image of the energy distribution function $e(\cdot)$ on any EDF $\mathcal{F}_{\mathbf{M}}$ is ``small enough''. 
Intuitively, we can calculate the area of $e^{-1}(\mathcal{F}_{\mathbf{M}})$ on the unit sphere and prove that the area is ``small enough''. However, since the calculation of the area is rather complex for a high-dimensional sphere, we instead prove that the variation of center angle in $e^{-1}(\mathcal{F}_{\mathbf{M}})$ is small enough. Specifically, we have the following theorem.
\begin{theorem}\label{thm: center_angle_thm}
Assume that we segment $[-1, 1]$ (i.e., the value field of any entry of an eigenvector of a graph Laplacian matrix) into $k$ intervals with identical lengths, as illustrated in Figure~\ref{fig:enter-label}(a). Consider the Euclidean space of $[-1, 1]^{n}$ separated into $k^n$ identical hypercubes. We claim that each section of the unit sphere and a hypercube corresponds to a $e^{-1}(\mathcal{F}_{\mathbf{M}})$ for some NCL matrix $\mathbf{M}$.
We denote the maximum center angular variation of $e^{-1}(\mathcal{F}_{\mathbf{M}})$ among all NCL matrices $\mathbf{M}$s as $\theta_{\max}$. 
For $\forall \epsilon\geq 0$, there exists a $k_0=2(\frac{2}{\pi e^{\frac{1}{e}}})^{\frac{n}{2}}$, when $k\geq k_0$, then $\theta_{\max}\leq \epsilon$.
\end{theorem}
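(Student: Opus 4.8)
The plan is to reduce the claim to a purely geometric estimate on the unit sphere and then bound a central angle by a chordal distance. Following the identification asserted in the statement, I read $e^{-1}(\mathcal{F}_{\mathbf{M}})$ as the section $S_{\mathbf{M}} := \{v : \|v\|_2 = 1,\ v \in C_{\mathbf{M}}\}$, where $C_{\mathbf{M}}$ is the hypercube obtained as the product of the $n$ bins prescribed by the NCL matrix $\mathbf{M}$. Since $[-1,1]$ is cut into $k$ equal intervals, each $C_{\mathbf{M}}$ has side length $2/k$ and hence Euclidean diameter $\mathrm{diam}(C_{\mathbf{M}}) = \frac{2\sqrt{n}}{k}$. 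The central angular variation of $S_{\mathbf{M}}$ is $\theta_{\mathbf{M}} = \max_{u,v\in S_{\mathbf{M}}}\arccos\langle u,v\rangle$, and $\theta_{\max} = \max_{\mathbf{M}}\theta_{\mathbf{M}}$.

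First I would convert the central angle into a chord. For unit vectors $u,v$ the identity $\|u-v\|_2^2 = 2 - 2\langle u,v\rangle = 4\sin^2(\theta/2)$ gives $\|u-v\|_2 = 2\sin(\theta/2)$, so the central angle depends only on the chordal distance and not on where in space the cube sits; in particular, cubes near the origin are not a special case, since a cube too small to reach distance $1$ simply has empty intersection with the sphere. Next I would bound the chord by the cube diameter: any $u,v \in S_{\mathbf{M}}$ both lie in $C_{\mathbf{M}}$, so $\|u-v\|_2 \le \frac{2\sqrt{n}}{k}$. Combining the two yields $\sin(\theta_{\mathbf{M}}/2)\le \frac{\sqrt{n}}{k}$, hence $\theta_{\max} \le 2\arcsin\!\big(\tfrac{\sqrt{n}}{k}\big)$ uniformly in $\mathbf{M}$ (the bound uses only the common side length, which is why a single estimate covers every $\mathbf{M}$). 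This right-hand side tends to $0$ as $k\to\infty$, so for any $\epsilon\ge 0$ the inequality $\theta_{\max}\le\epsilon$ holds once $k$ exceeds a threshold obtained by solving $2\arcsin(\sqrt{n}/k)\le\epsilon$.

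Two points will need care, and I expect the second to be the main obstacle. The first is the identification issue: literally $e(v)=e(-v)$, and more generally $e$ identifies every sign-reflection $v\mapsto \mathbf{A}^{\top}\mathbf{D}\mathbf{A}v$ in the eigenbasis, so the genuine preimage $e^{-1}(\mathcal{F}_{\mathbf{M}})$ is a union of up to $2^n$ reflected copies of $S_{\mathbf{M}}$ spread around the sphere and would have large angular spread; I would therefore work with the single cube section $S_{\mathbf{M}}$ that the statement identifies with $e^{-1}(\mathcal{F}_{\mathbf{M}})$, justifying the restriction to one sign representative (a hemisphere) so that the bound applies to one connected section. The second and harder step is reconciling the clean chord estimate, whose natural threshold scales like $\sqrt{n}/\sin(\epsilon/2)$ and hence depends on $\epsilon$, with the explicit $k_0 = 2\big(\tfrac{2}{\pi e^{1/e}}\big)^{n/2}$ in the statement, which is $\epsilon$-independent and grows like $(\,\cdot\,)^{n/2}$: that form instead suggests an area- or packing-based comparison (the sections partition the sphere, so their number is at most $k^n$ while the surface measure $2\pi^{n/2}/\Gamma(n/2)$ enters), with the $e^{1/e}$ constant emerging from a Stirling-type bound on $\Gamma(n/2)$. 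Carrying this comparison through while keeping the inequality pointed in the correct direction is the delicate part, and I would treat the elementary chord bound above as the backbone, importing the area estimate only to obtain the stated closed form.
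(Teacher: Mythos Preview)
Your chord-to-angle backbone is exactly the paper's argument: bound the chord by the hypercube diagonal, then convert via $\theta = 2\arcsin(\tfrac{1}{2}\|u-v\|_2)$. Where you diverge is in your conjecture about the origin of the constants. The paper does \emph{not} use any area, packing, or Stirling estimate. Instead it (i) writes the diagonal as $(n(\tfrac{2}{k})^2)^{1/n}$ rather than your $(n(\tfrac{2}{k})^2)^{1/2}=\tfrac{2\sqrt n}{k}$, (ii) applies the scalar bound $n^{1/n}\le e^{1/e}$ (the maximum of $x^{1/x}$ occurs at $x=e$), and (iii) relaxes $\arcsin x\le \tfrac{\pi}{2}x$. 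Chaining these gives $\theta_{\max}\le \pi\cdot\tfrac12 e^{1/e}(\tfrac{2}{k})^{2/n}$, and solving for $k$ produces the stated form with base $\pi e^{1/e}/2$ and exponent $n/2$. So the $e^{1/e}$ and the $n/2$ power come from purely elementary one-line inequalities, not from surface-area asymptotics.

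Two of your worries are legitimate and are simply not addressed in the paper. First, your observation that any honest threshold must depend on $\epsilon$ is correct: solving the paper's own final inequality yields $k\ge 2\big(\tfrac{\pi e^{1/e}}{2\epsilon}\big)^{n/2}$, so the $\epsilon$-free $k_0$ in the statement (and the reciprocal base appearing in the proof's last line) are typos rather than something you were failing to recover. Second, the paper silently treats $e^{-1}(\mathcal F_{\mathbf M})$ as the single cube section $S_{\mathbf M}$, exactly as you proposed restricting to one sign representative; the reflection ambiguity you flagged is real but is not handled in the original either. In short: keep your chord bound, drop the area/Stirling detour, and obtain the printed constants from $n^{1/n}\le e^{1/e}$ together with $\arcsin x\le \tfrac{\pi}{2}x$ applied to the paper's $1/n$-power expression for the diagonal.
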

Now, we are able to provide an upper bound for the energy distribution variance for EDF $\mathcal{F}_{\mathbf{M}}$ of any NCL matrix $\mathbf{M}$. 

\begin{theorem}\label{thm: energy_variance}
    Given the conditions of Theorem~\ref{thm: center_angle_thm}, for any eigenvector $v$ of Laplacian amtrix of graph $\mathcal{G}$, we have $\max_{v\in \mathcal{S}^{n-1}, e_u\in \mathcal{F}_{\tau(v)}}{||e_u-e(v)||_2}\leq 2(\frac{4n}{k^2})^{\frac{1}{n}}$, where $\mathcal{S}^{n-1}$ is $n-1$ dimensional unit sphere embedded in $n$ dimensional Euclidean space.
\end{theorem}
By Theorem~\ref{thm: energy_variance}, we are able to reveal that the additional discretization process in node classification task does not bring significant deviation in the ground truth's energy distribution in the spectral domain. Meanwhile, given a tolerance of the change of energy distribution from an eigenvector and its corresponding NCL matrix, we can also estimate the minimum number of identical interval segments $k$ for the specification of the discritization thresholds to satisfy precision requirements.

\section{Empirical Investigation}


\subsection{Research Questions}
We are interested in answering four research questions (RQs) below. \textbf{RQ1}: What are the trends of GNNs' performance in capturing the information encoded in different frequency components across the frequency domain?
\textbf{RQ2}: How do various GNNs compare in their ability to capture information across different frequency components, both holistically and in specific ranges?
\textbf{RQ3}: How will the benchmark show its practical implications to guide a practitioner's choice of GNN? 
\textbf{RQ4}: How does the depth of GNNs affect the proposed accuracy curves in the spectral domain?

\subsection{Qualitative Performance Comparison in the Spectral Domain}\label{subsec:quala_perf}

\begin{figure*}[!t]
\centering
    \vspace{-2mm}
    \includegraphics[width=0.99\linewidth]{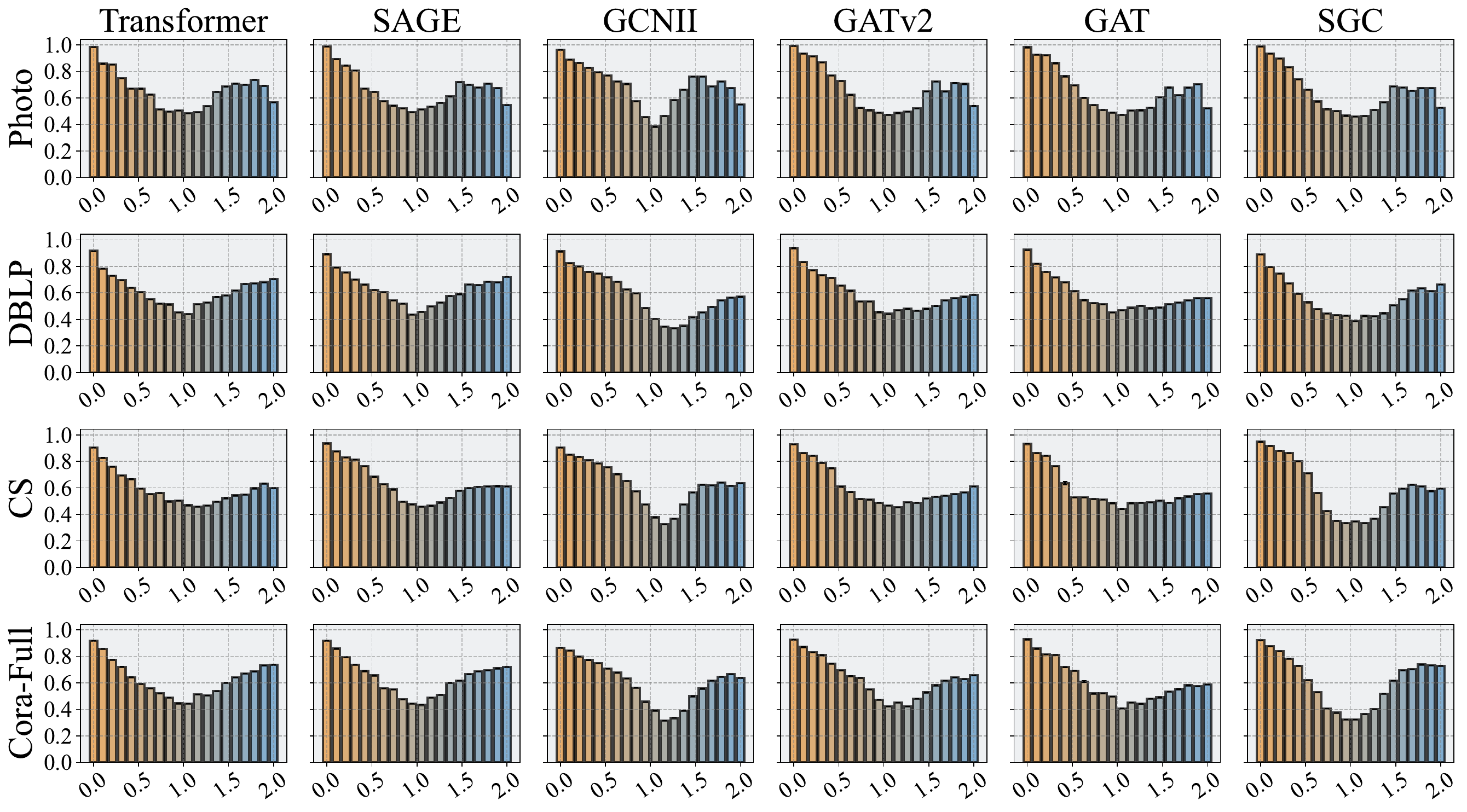} 
    \vspace{-3mm}
    \caption{The accuracy curves of different GNNs in the whole spectral domain. In each subplot, the $x$-axis represents the frequency and the $y$-axis represents the accuracy of GNNs in the node classification task with the ground truth labels derived from the associated frequency bin.}
    \vspace{-3mm} 
    \label{fig:main-benchmark}
\end{figure*}
We first answer RQ1 by analyzing the general trends that GNNs show in our benchmark, and we present the general tendency of the accuracy curves in the spectral domain for different GNNs in Figure~\ref{fig:main-benchmark}. We note that only partial results are presented here due to space limit, and see Appendix~\ref{quantitative_complementary} for complete results.
We have the following observations. 

First, from the perspective of a general tendency, we observe that GNNs show V-shaped curves in all cases. Such a phenomenon indicates that GNNs typically have stronger capability in capturing the information encoded in the lowest and highest frequency components (e.g., those associate with the smallest and largest frequency values) compared with the middle frequency components. Specifically, when the task-relevant information is encoded in the low frequency components, the neighborhood aggregation can directly benefit the prediction of each node. This is because nodes with similar labels tend to connect with each other, which can significantly facilitate predictive performance. On the other hand, when the task-relevant information is encoded in the high frequency components, the neighbors of each node can also contribute to its own prediction if the GNNs are able to learn not to predict the same label for connected nodes. However, in contrast to the prevalent opinion, the capability of capturing the task-relevant information encoded in the middle frequency components is the most difficult for all adopted GNNs. A key reason for this phenomenon is that in this case, the label distribution of each node's neighborhood is generally uniform. Therefore, it becomes very difficult for GNNs to learn a general criterion to predict the label of a node based on its neighbors. Based on the discussion above, we argue that the GNNs' weakness in capturing the task-relevant information encoded in the middle frequency components reveals an inherent limitation of neighborhood aggregation mechanism.

Second, we are also interested in comparing tendencies across different GNNs and datasets. Specifically, we observe consistent tendencies of accuracy curves in the spectral domain for the same GNN model across different datasets, which validates the stability of our proposed evaluation strategy. Meanwhile, on the same dataset, different GNNs can show different tendencies across the frequency axis, which demonstrates the difference in the capability of GNNs to capture task-relevant information. This further reveals the significance of the proposed evaluation method in deriving a stable and reliable evaluation for the performance of different GNNs in the spectral domain.

\begin{table}[]
\setlength{\tabcolsep}{2.4pt}
\renewcommand{\arraystretch}{1.1}
    \vspace{-3mm}
\caption{Quantitative results of Normalized AUAC score in percentage for 14 GNN models across six real-world datasets. The best ones are in \textbf{Bold} and the second best ones are \underline{underlined}. We also mark out the average ranking for each model across all datasets.}
    \vspace{-2mm}
\label{table:overall}
\footnotesize
\begin{tabular}{c|c|c|c|c|c|c|>{\columncolor[gray]{0.9}}c}
\toprule
\textbf{Model} & \textbf{Computers} & \textbf{Cora} & \textbf{CS} & \textbf{DBLP} & \textbf{Photo}   & \textbf{Physics} & \textbf{Avg. Ranking} \\ \midrule
APPNP & 56.94 \scriptsize{$\pm$ 1.08} & 54.26 \scriptsize{$\pm$ 1.80} & 52.90 \scriptsize{$\pm$ 2.16} & 51.69 \scriptsize{$\pm$ 0.40} & 56.77 \scriptsize{$\pm$ 1.51} & 53.60 \scriptsize{$\pm$ 1.38} & 13.67 \scriptsize{$\pm$ 0.75} \\
FA & 58.76 \scriptsize{$\pm$ 1.42} & 54.88 \scriptsize{$\pm$ 2.82} & 52.95 \scriptsize{$\pm$ 3.79} & 52.76 \scriptsize{$\pm$ 1.10} & 61.79 \scriptsize{$\pm$ 1.78} & 55.81 \scriptsize{$\pm$ 2.33} & 12.00 \scriptsize{$\pm$ 1.41} \\
GIN & 55.39 \scriptsize{$\pm$ 0.86} & 60.98 \scriptsize{$\pm$ 2.94} & 57.90 \scriptsize{$\pm$ 3.35} & 58.99 \scriptsize{$\pm$ 2.95} & 60.57 \scriptsize{$\pm$ 1.48} & 59.65 \scriptsize{$\pm$ 3.27} & 10.33 \scriptsize{$\pm$ 2.36} \\
SGC & 62.07 \scriptsize{$\pm$ 2.38} & 61.09 \scriptsize{$\pm$ 3.71} & 59.25 \scriptsize{$\pm$ 4.19} & 56.36 \scriptsize{$\pm$ 1.95} & 64.98 \scriptsize{$\pm$ 2.59} & 58.87 \scriptsize{$\pm$ 3.38} & 9.50 \scriptsize{$\pm$ 1.98} \\
GAT & 63.47 \scriptsize{$\pm$ 2.50} & 60.33 \scriptsize{$\pm$ 2.26} & 58.28 \scriptsize{$\pm$ 2.09} & 58.41 \scriptsize{$\pm$ 1.67} & 65.56 \scriptsize{$\pm$ 2.59} & 60.38 \scriptsize{$\pm$ 2.10} & 8.67 \scriptsize{$\pm$ 1.89} \\
GPS & 60.94 \scriptsize{$\pm$ 1.59} & 58.23 \scriptsize{$\pm$ 0.99} & 59.89 \scriptsize{$\pm$ 1.73} & 58.94 \scriptsize{$\pm$ 0.92} & 63.53 \scriptsize{$\pm$ 1.77} & 62.90 \scriptsize{$\pm$ 1.52} & 8.33 \scriptsize{$\pm$ 2.13} \\
GCN & 63.42 \scriptsize{$\pm$ 2.39} & 62.17 \scriptsize{$\pm$ 4.18} & 59.82 \scriptsize{$\pm$ 4.04} & 57.21 \scriptsize{$\pm$ 2.09} & 66.20 \scriptsize{$\pm$ 2.78} & 59.49 \scriptsize{$\pm$ 3.61} & 7.67 \scriptsize{$\pm$ 2.81} \\
GatedGraph & 58.67 \scriptsize{$\pm$ 1.22} & \textbf{69.85 \scriptsize{$\pm$ 2.74}} & \underline{66.14 \scriptsize{$\pm$ 2.93}} & \underline{63.84 \scriptsize{$\pm$ 2.56}} & 60.41 \scriptsize{$\pm$ 1.13} & 60.20 \scriptsize{$\pm$ 2.86} & 6.17 \scriptsize{$\pm$ 4.60} \\
Transformer & \underline{64.83 \scriptsize{$\pm$ 1.75}} & 62.89 \scriptsize{$\pm$ 1.75} & 59.43 \scriptsize{$\pm$ 1.50} & 61.86 \scriptsize{$\pm$ 1.36} & 65.86 \scriptsize{$\pm$ 1.87} & 60.93 \scriptsize{$\pm$ 1.49} & 5.50 \scriptsize{$\pm$ 2.22} \\
Cheb & 63.63 \scriptsize{$\pm$ 1.65} & 60.54 \scriptsize{$\pm$ 1.40} & 60.88 \scriptsize{$\pm$ 1.93} & 61.14 \scriptsize{$\pm$ 1.28} & 65.88 \scriptsize{$\pm$ 1.68} & 62.95 \scriptsize{$\pm$ 2.06} & 5.50 \scriptsize{$\pm$ 1.61} \\
1-GNN & 56.04 \scriptsize{$\pm$ 1.12} & \underline{68.00 \scriptsize{$\pm$ 2.45}} & \textbf{66.27 \scriptsize{$\pm$ 2.77}} & \textbf{64.16 \scriptsize{$\pm$ 1.96}} & 58.17 \scriptsize{$\pm$ 1.47} & \underline{65.16 \scriptsize{$\pm$ 2.77}}& 5.33 \scriptsize{$\pm$ 5.44} \\
GATv2 & 64.22 \scriptsize{$\pm$ 2.46} & 63.05 \scriptsize{$\pm$ 2.21} & 60.41 \scriptsize{$\pm$ 2.12} & 59.38 \scriptsize{$\pm$ 1.92} & \underline{66.54 \scriptsize{$\pm$ 2.73}} & 62.58 \scriptsize{$\pm$ 2.12} & 4.67 \scriptsize{$\pm$ 1.49} \\
GCNII & \textbf{66.57 \scriptsize{$\pm$ 2.10}} & 60.54 \scriptsize{$\pm$ 2.73} & 62.91 \scriptsize{$\pm$ 2.75} & 58.13 \scriptsize{$\pm$ 3.00} & \textbf{69.03 \scriptsize{$\pm$ 2.30}} & \textbf{66.55 \scriptsize{$\pm$ 2.50}} & 4.50 \scriptsize{$\pm$ 4.03} \\
SAGE & 64.60 \scriptsize{$\pm$ 1.69} & 63.97 \scriptsize{$\pm$ 1.80} & 63.17 \scriptsize{$\pm$ 2.01} & 62.88 \scriptsize{$\pm$ 1.33} & 66.13 \scriptsize{$\pm$ 1.88} & 64.29 \scriptsize{$\pm$ 2.10} & 3.17 \scriptsize{$\pm$ 0.37} \\
\bottomrule
\end{tabular}
\vskip -4ex
\end{table}

\subsection{Quantitative Performance Comparison in the Spectral Domain}

\label{quantitative_main}

We now answer RQ2 by analyzing the quantitative results of Normalized AUAC score in percentage for 14 GNN models across six real-world datasets, and we show the results in Table~\ref{table:overall}. According to Section~\ref{design}, the quantitative value of the Normalized AUAC score reflects the general capability of GNNs to capture the task-relevant information encoded in different frequency ranges. 
In addition, we are also interested in analyzing this capability on specific ranges of frequencies. Therefore, we split the range of frequencies evenly into three sections, namely the low frequency frange (the components with a frequency value ranks at the bottom one third), the middle frequency range (the components with a frequency value ranks at the middle one third), and the high frequency range (the components with a frequency value ranks at the top one third). We show the average rankings of GNNs by measuring their Normalized AUAC score in these three different frequency ranges above in Figure~\ref{fig:ranking-line-chart}.
Notably, the proposed benchmark is able to analyze the performance of both spatial- and spectral-based GNNs from a consistent spectral view. Therefore, we are able to involve both types of GNNs into the comparison at the same time.
We have the observations below.

Comparing models focusing on the general spectral domain, we observe that the performance measured by Normalized AUAC score gives a ranking that challenges the traditional assumptions on the relative superiority of these GNNs. Specifically, SAGE, GCNII, and GATv2 are the GNNs showing the best performance in terms of the average ranking across all datasets. In fact, these GNNs are rarely discussed and evaluated from a spectral perspective. The primary reason is that they are mostly designed in the spatial domain and most do not have a well-defined frequency response function. We argue that the advantages of these GNNs in the spectral domain are largely neglected by the current spectral-based studies~\citep{nt2019revisiting,wu2019simplifying} due to their reliance on an explicit analytical form of frequency response function.

Comparing models focusing on the three specific sections of frequency components, we observe that different GNNs specialize in extracting task-relevant information from different ranges of frequencies. Specifically, GNNs specialize in learning from low (e.g., SAGE and Cheb) and high (e.g., GPS and Transformer) frequency components typically do not show strong superiority over other GNNs in learning from middle frequency components. Instead, GNNs such as GATv2 and GCN achieve clear superiority over other GNNs in learning from middle frequency components, which is often ignored by other studies relying on frequency response analysis.

\begin{figure*}[!t]
\centering
\vspace{-5mm}
    \includegraphics[width=0.92\linewidth]{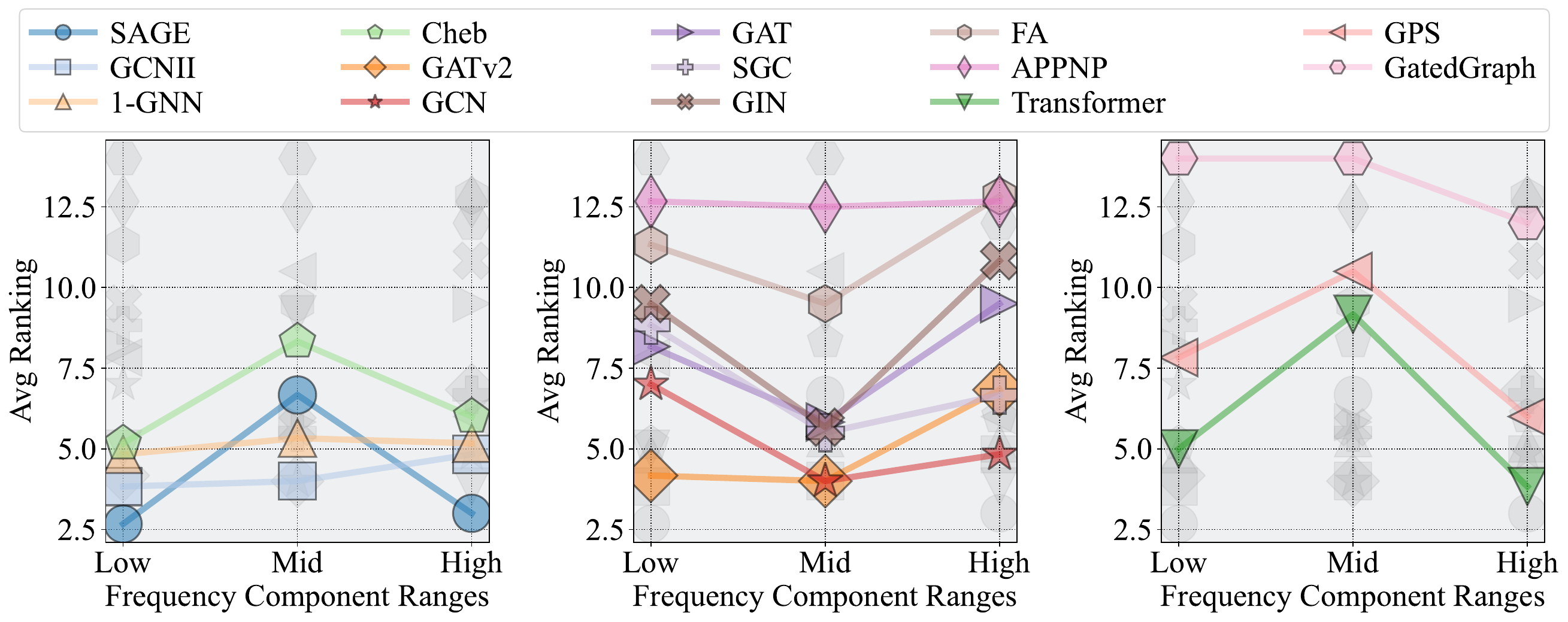} 
\vspace{-2mm}
    \caption{Performance comparison in the average ranking of 14 GNNs on six real-world datasets. The GNNs are shown by obtaining the best rankings on low frequency components (left), on middle frequency components (middle), and on high frequency components (right).}
\vspace{-4mm}
    \label{fig:ranking-line-chart}
\end{figure*}

\subsection{Case Study}
\vskip -0.5ex
\begin{wrapfigure}[14]{R}{0.5\linewidth}
    \vspace{-5mm}
    \includegraphics[width=1.0\linewidth]{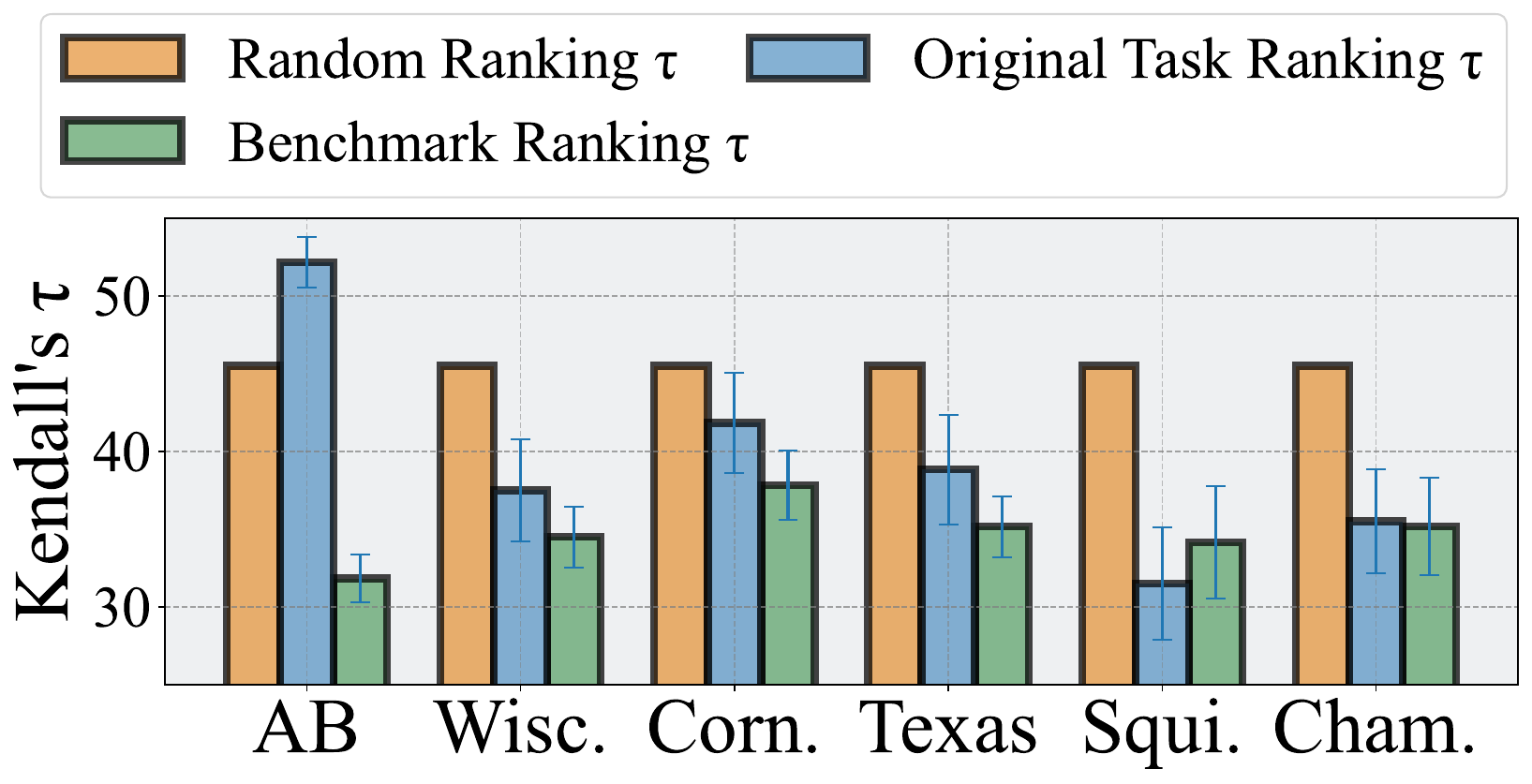}
    \vspace{-6mm}
    \caption{Kendall's $\tau$ comparison on new datasets between a random ranking (orange), the rankings from the original graph learning task (blue), and the average rankings from our benchmark (green). }
    \label{fig:case_study}
\end{wrapfigure}


In this subsection, we conduct a case study to explore RQ3. Specifically, we propose to simulate the real-world problem of needing to choose from a variety of different GNNs, and we then analyze how the findings from our benchmark can help practitioners.
Specifically, we adopt six new real-world datasets, namely Airport-Brazil (AB), Wisconsin (Wisc.), Cornell (Corn.), Squirrel (Squi.), and Chameleon (Cham.). Most of these dataests are reported to be difficult for node classification task with GNNs. We first perform node classification with all 14 GNNs based on these datasets and then derive the ranking of all GNNs as $r_1$. Here $r_1$ serves as the actual ranking we seek to know to pick the best GNNs to use. However, in practice, we may not always be able to perform experiments prior to making a choice. Therefore, we propose to analyze how well other rankings can approximate such an actual ranking. We first analyze the node labels in the training node set to identify the frequency component range (low, mid, or high) where most energy falls in. According to this range, we collect the associated performance ranking $r_2$ from Section~\ref{quantitative_main} as the ranking derived from our benchmark. As a comparison, we also collect random rankings $r_3$ and the node classification rankings $r_4$ directly derived from the node classification task on the chosen six datasets.


We show the average Kendall-Tau (KT) distance $\tau$ (total number of inversions for any two positions $i$ and $j$ where $i>j$) between the actual ranking $r_1$ and the benchmark ranking $r_2$ in Figure~\ref{fig:case_study}. Here, a smaller KT distance indicates larger similarity between the two rankings. We found that the KT distance between $r_1$ and $r_2$ (Benchmark Ranking $\tau$) is significantly smaller than that between $r_1$ and $r_3$ (Random Ranking $\tau$) or $r_4$ (Original Task Ranking $\tau$) in most cases, which indicates a satisfying approximation of $r_1$ with $r_2$. This reveals the practical significance of the benchmark in understanding the superiority across different GNNs prior to any experiments.

\subsection{Parameter Study}
\vskip -0.4ex

We finally answer RQ4 by changing the depth of each GNN to explore how the results of the proposed evaluation protocol will change. We note that stacking multiple filters (by adding more iterations of neighborhood aggregation) significantly affects the frequency response. Therefore, existing studies that analyze the performance of GNNs by relying on their aggregators' frequency response functions typically have significantly different conclusions when the layer number of GNNs changes. Specifically, we range the number of GNN layers from two to four, and we present an example of the accuracy curves in the spectral domain across two-, three-, and four-layer cases in Figure~\ref{fig:param_study_main} (see Appendix~\ref{param_appendix} for complete results). We observe that changing the layer number does not affect the shape of the accuracy curves in the spectral domain. Such an observation can also be found across different GNNs and datasets, which further validates the stability of the proposed evaluation protocol.

\begin{figure}
    \centering
    \vspace{-2mm}
    \includegraphics[width=1.0\linewidth]{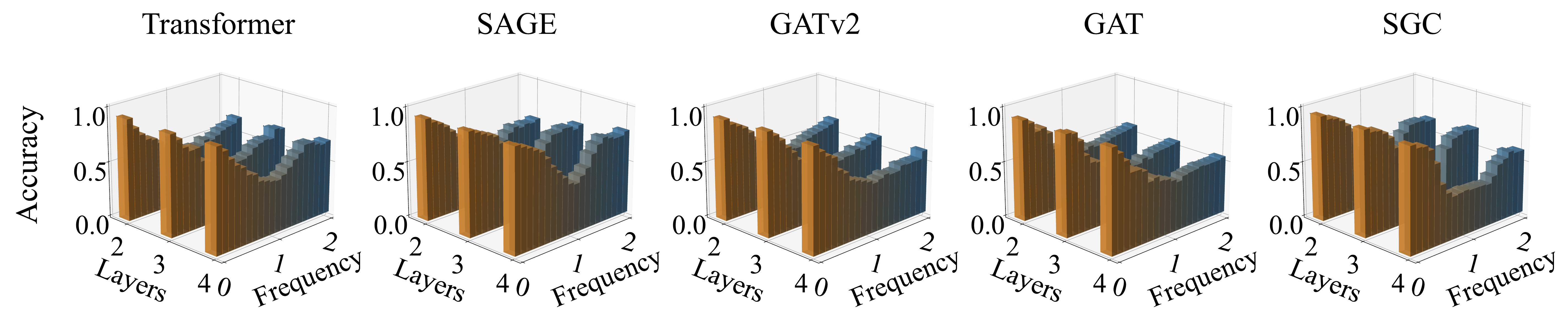}
    \vspace{-4mm}
    \caption{Accuracy curves in the spectral domain across different GNNs on the Coauthor-CS dataset. The shape of these curves does not significantly change across different layer numbers, validating the stability of the proposed evaluation protocol.}
    \vspace{-3mm}
    \label{fig:param_study_main}
\end{figure}

\section{Related Works}

\noindent\textbf{Spectral Graph Neural Network Analysis.} 
Compared to spatial GNNs~\citep{hamilton2017inductive}, spectral GNNs typically offer a sound theoretical basis~\citep{balcilar2021breaking, bo2023survey}.
Recent studies have shown that the neighborhood aggregation mechanism in GNNs typically acts as a low-pass graph signal filter~\citep{chang2021not, nt2019revisiting}, capturing the lowest frequency components that are usually relevant for various graph learning tasks. 
However, while more flexible filters have been proposed to better capture task-relevant information encoded in different frequency components~\citep{li2018adaptive, guo2023graph, bianchi2021graph}, existing research typically overlook the behavioral characteristics of GNNs in the spectral domain as a whole.
As such, the roles of other modules, such as the non-linear layers, are often ignored.
Recent works~\citep{balcilar2021analyzing, yang2024does} have show evidence that these modules other than neighborhood aggregation can significantly alter the frequency components of the GNN's output, This underscores the need for a comprehensive analysis that considers GNNs in their entirety instead of only focusing on their neighborhood aggregation mechanisms. Different from existing works, we propose to consider GNNs as a whole and conduct benchmarking form a spectral perspective.

\noindent\textbf{Graph Neural Network Benchmarking.} Research GNN benchmarks focus on evaluating the performance of GNNs in terms of accuracy~\citep{wu2022high,zheng2021graph}, scalability~\citep{duan2022benchmarking}, and efficiency~\citep{baruah2021gnnmark}. For example, various benchmarks have been proposed to evaluate the performance of GNNs in various tasks~\citep{dwivedi2023benchmarking}, such as recommendations~\citep{wu2022graph} and molecular property predictions~\citep{fung2021benchmarking}. In terms of the scalability of GNNs, different benchmarks are also proposed, such as LRGB~\citep{dwivedi2022long} for long-range dependency and LS-Bench for training on large-scale graphs. The efficiency of GNNs is also studied in various benchmarking works~\citep{gong2024gnnbench}. Despite existing efforts of evaluating GNNs, most benchmarks emphasize node classification accuracy, scalability, and computational efficiency without delving into the behavioral characteristics of GNNs in the spectral domain. This hinders a spectral understanding of the strengths and weaknesses of popular GNNs.
Different from these benchmarks, we have proposed a comprehensive benchmark to evaluate both spatial and spectral GNNs from a consistent spectral view. To the best of our knowledge, it is a first-of-its-kind work that reveals insights challenging prevalent opinions from a spectral perspective.

\section{Conclusion}

This paper presents a comprehensive benchmark for evaluating GNNs from a spectral perspective, challenging common opinions about how GNNs process graph data. Through exploratory studies and rigorous experiments on real-world datasets, we demonstrate that GNNs can flexibly generate outputs with diverse frequency components, even when certain frequencies are absent in the input. This finding contradicts the prevailing opinion that neighborhood aggregation mechanisms dominate GNN behavior in the spectral domain. Our novel evaluation protocol, supported by theoretical analysis, provides a fundamental framework to measure the capability of GNNs to capture and leverage information across different frequency components. These findings open new avenues for understanding and improving GNN architectures, emphasizing the need to analyze GNNs in their entirety rather than focusing solely on their aggregation mechanisms. Our work lays the foundation for future research in spectral analyses of GNNs and provides valuable tools for practitioners to select and design more effective GNN models for various graph-based learning tasks.

\bibliography{iclr2024_conference}
\bibliographystyle{iclr2024_conference}

\appendix

\section{Reproducibility} \label{app_repro}
All experiments were conducted using Python with the PyTorch~\citep{paszke2017automatic} and PyTorch Geometric~\citep{Fey/Lenssen/2019} libraries. We detail the exact experimental settings for the preliminary study and main benchmark below. We also list all of our experimental hyperparameters in Table \ref{tab:hparams}.

\subsection{Preliminary Study}
The datasets for the preliminary study are constructed by first taking each graph and computing its eigenvectors based on the normalized graph Laplacian. We then sort the eigenvectors by eigenvalue in ascending order and bin them into 20 uniform-width bins. Each bin contains the mean of the eigenvectors whose corresponding eigenvalues falling into that bin. We then split the bins into 3 uniform intervals corresponding to low, medium, and high frequencies.

We then task a randomly initialized GCN and FA with predicting the mean of these low, medium, and high frequency eigenvectors using the mean-squared error (MSE) loss using the eigenvectors of the complementary ranges as features. For each setting (e.g. using low frequencies as features and high frequencies as targets), we split our constructed dataset using an 80\%/20\%/20\% train/validation/test split under a transductive node regression setting where we randomly generate the train/validation/test masks. During training, we standardize the eigenvector targets, subtracting their mean and dividing by their variance, which is a common practice in machine learning to facilitate learning during gradient descent~\citep{hastie2009elements}. During evaluation, we restore the outputs of the GNN back to the original scale.

\subsection{Main Benchmark}
For the main benchmark, we construct each dataset similar to the preliminary study by collecting the normalized Laplacian eigenvectors of each graph followed by binning. Next, as this benchmark is a node classificaiton task, each entry of each binned eigenvector is assigned a class corresponding to that which bin it falls into between -1 and 1. For example, if the number of provided classes is 5, and a given binned eigenvector's entry is 0.8, then that entry will be assigned a class label of 4, the final class label.

Next, we conduct experiments where, for every bin $b$ of eigenvectors, we train each GNN to predict the classes of each entry of $b$ using the features from the original dataset, e.g. BOW text features for Cora-Full. We train our models using the cross-entropy loss and use a random 60\%/20\%/20\% train/validation/test split under a transductive node classification setting. We detail our hyperparameter settings for our benchmark below in Table \ref{tab:hparams}.

\begin{table}[h]
    \centering
    \begin{tabular}{lcc}
    \toprule
    \textbf{Hyperparameter} & \textbf{Preliminary study value} & \textbf{Main benchmark value} \\
    \midrule
    \# Layers & 3 & 2 \\
    Hidden Dimension Size & 64 & 64 \\
    Learning Rate & 0.0002 & 0.001\\
    Number of Epochs & 2000 & 500 \\
    Optimizer & Adam & Adam \\
    Dropout Rate & 0.0 & 0.0 \\
    Learning Rate Scheduler & Cosine annealing, $T_0 = 10$ & None \\
    Initialization & He (Kaiming) & Default PyTorch\\
    Early stopping & None & None \\
    \bottomrule
    \end{tabular}
    \caption{Hyperparameters for preliminary study and main benchmark}
    \label{tab:hparams}
\end{table}

\section{Proofs}
In order to support the solidity of our design of the graph node class labels matrix (NCL matrix), we claim that the energy distribution of an eigenvector of the graph Laplacian matrix is close to that of its corresponding NCL matrix. To verify this claim, we first define the \textit{energy distribution field} of a node class label distribution.
\begin{definition}
    (Energy Distribution Field) The energy distribution field, denoted as $\mathcal{F}_{M}$, of an NCL matrix $M\in \mathbb{R}^{n\times k}$ is the set of energy distributions of the unit vectors whose corresponding NCL matrix is $M$. In other words, $\mathcal{F}_{M}:=\{e(v)\in\mathbb{R}^{n\times 1}|||v||_2=1, \tau(v)=M\}$, where $e(\cdot)$ is the energy distribution function mapping a vector to its energy distribution in the graph spectrum field, $\tau(\cdot)$ is the function mapping unit vectors to their NCL matrices. 
\end{definition}

With the concept of an energy distribution field (EDF), we formulate the "closeness" between the energy distribution between the eigenvector $v$ of graph Laplacian and its corresponding NCL matrix $\tau(v)$ as $\max_{e_u\in \mathcal{F}_{\tau(v)}}{||e_u-e(v)||_2}$. We take two steps to validate that the optimal value of the maximization problem is ``small enough'': \textbf{\textit{(i)}} We prove that the energy distribution function is Lipschitz, which indicates that it is a ``smooth'' function where quantifying its function value variations is equivalent to quantifying its variable variations; \textbf{\textit{(ii)}} We prove that the inverse image of the energy distribution function $e(\cdot)$ on any energy distribution field $\mathcal{F}_{M}$ is ``small enough''. 

For the first step, since the energy distribution function is essentially a linear function followed by a vector normalization procedure, we can write the energy distribution function as $e(v)=\frac{(\mathbf{A}v)\odot (\mathbf{A}v)}{||\mathbf{A}v||^2}$ where $\mathbf{A}$ is an orthonormal matrix. We can show that the energy distribution function is Lipschitz:
\begin{theorem}\label{thm: energy_Lipschitz}
    The energy distribution function $e(v)=\frac{(\mathbf{A}v)\odot (\mathbf{A}v)}{||\mathbf{A}v||^2}$, with $\mathbf{A}$ being orthonormal, is Lipschitz on unit sphere.
\end{theorem}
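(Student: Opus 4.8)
The plan is to exploit the fact that $\mathbf{A}$ being orthonormal makes the denominator trivial on the unit sphere. Indeed, for any unit vector $v$ we have $\|\mathbf{A}v\|_2 = \|v\|_2 = 1$, so the normalization disappears and $e(v) = (\mathbf{A}v)\odot(\mathbf{A}v)$ reduces to the componentwise square of the unit vector $\mathbf{A}v$. Thus it suffices to show that the componentwise-squaring map $w \mapsto w\odot w$ is Lipschitz on the unit sphere and then precompose with the linear isometry $v \mapsto \mathbf{A}v$.

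First I would take two unit vectors $u,v$ and expand the squared distance using the difference-of-squares factorization entrywise:
\begin{equation}
\|e(u) - e(v)\|_2^2 = \sum_{i=1}^n \big((\mathbf{A}u)_i - (\mathbf{A}v)_i\big)^2 \big((\mathbf{A}u)_i + (\mathbf{A}v)_i\big)^2.
\end{equation}
The key step is a uniform bound on the second factor: since every entry of a unit vector has absolute value at most $1$, we get $|(\mathbf{A}u)_i + (\mathbf{A}v)_i| \le 2$, hence $\big((\mathbf{A}u)_i + (\mathbf{A}v)_i\big)^2 \le 4$ for every index $i$. Pulling this constant out and using that $\mathbf{A}$ preserves the Euclidean norm then yields
\begin{equation}
\|e(u) - e(v)\|_2^2 \le 4\sum_{i=1}^n \big((\mathbf{A}u)_i - (\mathbf{A}v)_i\big)^2 = 4\,\|\mathbf{A}(u-v)\|_2^2 = 4\,\|u-v\|_2^2,
\end{equation}
so $\|e(u) - e(v)\|_2 \le 2\,\|u-v\|_2$ and $e$ is Lipschitz with constant $2$ on the unit sphere.

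The whole argument is essentially a two-line computation once one recognizes that the normalization is constant on the sphere, so I do not expect a serious obstacle. The only point requiring care is the uniform bound on the cross term $(\mathbf{A}u)_i + (\mathbf{A}v)_i$, which relies crucially on the restriction to the unit sphere: off the sphere the entries $(\mathbf{A}v)_i$ are unbounded and the map is not globally Lipschitz. As an alternative derivation I might instead compute the Jacobian of $v \mapsto (\mathbf{A}v)\odot(\mathbf{A}v)$, namely $2\,\mathrm{diag}(\mathbf{A}v)\,\mathbf{A}$, whose operator norm is $2\max_i|(\mathbf{A}v)_i| \le 2$ on the sphere, and conclude via the mean value inequality along great-circle paths; this recovers the same constant and makes the geometric role of the unit-sphere constraint transparent.
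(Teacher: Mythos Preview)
Your proof is correct and follows essentially the same approach as the paper: recognize that the normalization is trivially $1$ on the unit sphere, apply the entrywise difference-of-squares factorization $(\mathbf{A}u)\odot(\mathbf{A}u)-(\mathbf{A}v)\odot(\mathbf{A}v)=(\mathbf{A}u-\mathbf{A}v)\odot(\mathbf{A}u+\mathbf{A}v)$, bound each entry of the sum factor by $2$, and use that $\mathbf{A}$ is an isometry to obtain the Lipschitz constant $2$. The only cosmetic difference is that you work with squared norms expanded coordinatewise whereas the paper manipulates the Hadamard-product expression directly before bounding.
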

\begin{proof}
    Assume there are two unit vectors $v_1$ and $v_2$, then
    \begin{equation}
    \begin{aligned}
        ||e(v_1)-e(v_2)||_2&=||\frac{(\mathbf{A}v_1)\odot (\mathbf{A}v_1)}{||\mathbf{A}v_1||^2}-\frac{(\mathbf{A}v_2)\odot (\mathbf{A}v_2)}{||\mathbf{A}v_2||^2}||_2,
        \\&=||(\mathbf{A}v_1)\odot (\mathbf{A}v_1)-(\mathbf{A}v_2)\odot (\mathbf{A}v_2)||_2,\\&=||(\mathbf{A}v_1-\mathbf{A}{v_2})\odot(\mathbf{A}v_1+\mathbf{A}v_2)||_2,\\
        &\leq 2||\mathbf{A}(v_1-v_2)||_2,\\
        &\leq 2||\mathbf{A}||_2||(v_1-v_2)||_2,\\
        &=2||v_1-v_2||_2.
    \end{aligned}
    \end{equation}
    Here, the second equation follows because $\mathbf{A}$ is an orthogonal matrix, which implies that $||\mathbf{A}v_1||_2=||\mathbf{A}v_2||_2=1$. The third equation can be verified easily with the definition of Hadamard multiplication. The first ``$\leq$'' is derived from the observation that any entry of $\mathbf{A}v_1$ and $\mathbf{A}v_2$ has an absolute value less or equal to one since those two vectors are unit vectors. The second ``$\leq$'' follows by the Cauchy-Schwarz inequality. Finally, the last equality follows from the fact that $||\mathbf{A}||_2=1$ since $\mathbf{A}$ is an orthonormal matrix. 
\end{proof}
For the second step, we prove that the inverse image of the energy distribution function $e(\cdot)$ on any EDF $\mathcal{F}_{M}$ is ``small enough''. 
Intuitively, we can calculate the area of $e^{-1}(\mathcal{F}_{M})$ on the unit sphere and prove that the area is ``small enough''. However, since the calculation of the area is rather complex for a high-dimensional sphere, we instead prove that the variation of center angle in $e^{-1}(\mathcal{F}_{M})$ is small enough. Specifically, we have the following theorem:
\begin{theorem}\label{thm: center_angle_thm}
Assume that we segment $[-1, 1]$ (i.e., the value field of any entry of an eigenvector of a graph Laplacian matrix) into $k$ intervals with identical lengths, as illustrated in Figure~\ref{fig:enter-label}(a). Consider the Euclidean space of $[-1, 1]^{n}$ separated into $k^n$ identical hypercubes. We claim that each section of the unit sphere and a hypercube corresponds to a $e^{-1}(\mathcal{F}_{M})$ for some NCL matrix $M$.
We denote the maximum center angular variation of $e^{-1}(\mathcal{F}_{M})$ among all NCL matrices $M$s as $\theta_{\max}$. 
For $\forall \epsilon\geq 0$, there exists a $k_0=2(\frac{2}{\pi e^{\frac{1}{e}}})^{\frac{n}{2}}$, when $k\geq k_0$, then $\theta_{\max}\leq \epsilon$.
\end{theorem}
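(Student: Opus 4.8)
The plan is to reduce Theorem~\ref{thm: center_angle_thm} to a purely geometric statement about the unit sphere and then bound an angular diameter. By the identification asserted in the statement, for a fixed NCL matrix $\mathbf{M}$ the set $e^{-1}(\mathcal{F}_{\mathbf{M}})$ is the section $S_C := C \cap \mathcal{S}^{n-1}$ of the unit sphere cut out by a single axis-aligned hypercube $C \subseteq [-1,1]^n$ of side length $2/k$: two unit vectors share the same NCL matrix exactly when every coordinate lands in the same one of the $k$ subintervals, i.e. when they lie in a common hypercube. The quantity $\theta_{\max}$ is then the largest central angle $\angle(v_1,v_2)$ over all $v_1,v_2$ lying in a common section $S_C$, maximized over all hypercubes $C$ meeting the sphere. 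So it suffices to show this angular diameter is driven to $0$ as $k \to \infty$ and to produce an explicit threshold $k_0$.

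First I would establish the qualitative claim by a chord-length argument. If $v_1,v_2 \in S_C$ then both lie in $C$, so their Euclidean distance is at most the diameter of $C$, namely $\|v_1-v_2\|_2 \le \mathrm{diam}(C) = \tfrac{2\sqrt{n}}{k}$. Since $v_1,v_2$ are unit vectors, the chord and the central angle $\theta=\angle(v_1,v_2)$ satisfy exactly $\|v_1-v_2\|_2 = 2\sin(\theta/2)$, hence $\theta \le 2\arcsin\!\big(\tfrac{\sqrt{n}}{k}\big)$ whenever $\sqrt{n}/k \le 1$. The right-hand side is continuous, monotone decreasing in $k$, and vanishes as $k\to\infty$, so for every $\epsilon \ge 0$ there is a threshold beyond which $\theta_{\max}\le\epsilon$; inverting $\arcsin$ gives the concrete choice $k \ge \sqrt{n}/\sin(\epsilon/2)$. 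This already proves the existence of $k_0$ and shows the angular variation is controlled.

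To recover the explicit closed form stated in the theorem, I would instead bound the angular diameter through the $(n-1)$-volume of $S_C$ rather than through its chord, which is the natural quantity to later pair with the Lipschitz bound of Theorem~\ref{thm: energy_Lipschitz} and feed into Theorem~\ref{thm: energy_variance}. Concretely, one compares the surface measure of $S_C$ with that of a spherical cap of angular radius $\theta$, whose area is a $\Gamma$-function expression; equating the two and solving for $\theta$ produces a bound of the form $\theta_{\max} \lesssim (cn/k^2)^{1/n}$, matching the shape of the estimate in Theorem~\ref{thm: energy_variance}. The constants $\pi$ and $e^{1/e}$ then emerge from simplifying the $\Gamma$-factors via Stirling's formula together with the elementary maximization $n^{1/n}\le e^{1/e}$, and solving the resulting inequality for $k$ yields a threshold of the stated closed form.

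The main obstacle I anticipate is the set-identification step. Because $\mathbf{A}$ is orthonormal the energy map simplifies to $e(v)=(\mathbf{A}v)\odot(\mathbf{A}v)$ on the sphere, which is invariant under every coordinatewise sign flip of $\mathbf{A}v$; consequently $e$ is highly non-injective and $e^{-1}(\mathcal{F}_{\mathbf{M}})$ is a priori a union of several symmetric spherical sections rather than a single one. To make the angular-diameter bound meaningful I would either argue that $\theta_{\max}$ is measured within a single connected component (one hypercube section), or show the symmetry group acts so that the worst-case central-angle spread within one component still dominates. The secondary difficulty is purely computational: controlling the high-dimensional cap-area-to-angle conversion tightly enough to land on the exact constant $2\big(\tfrac{2}{\pi e^{1/e}}\big)^{n/2}$, where the Stirling estimates and the bound $n^{1/n}\le e^{1/e}$ must be applied with the correct direction of inequality.
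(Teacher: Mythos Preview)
Your chord-length argument in the second paragraph is in fact the paper's entire approach, not merely a qualitative warm-up: the paper never passes through spherical-cap areas, $\Gamma$-functions, or Stirling estimates, so your proposed third paragraph is an unnecessary detour. The paper starts exactly as you do---bound the chord between two unit vectors lying in a common small cube by the cube's diagonal, then convert to a central angle via $\theta=2\arcsin(\text{chord}/2)$---and then simply loosens with the two elementary inequalities $n^{1/n}\le e^{1/e}$ and $2\arcsin(x)\le \pi x$ before inverting for $k$. That is where the constants $e^{1/e}$ and $\pi$ in $k_0$ come from. The one substantive discrepancy is that the paper takes the ``longest diagonal'' of a side-$(2/k)$ hypercube to be $\bigl(n(2/k)^2\bigr)^{1/n}=(4n/k^2)^{1/n}$ rather than the Euclidean diameter $2\sqrt{n}/k$ that you (correctly) write down; it is this $1/n$-th root, not any cap-area computation, that produces the $(\,\cdot\,)^{n/2}$ shape of $k_0$ and the factor $(4n/k^2)^{1/n}$ reappearing in Theorem~\ref{thm: energy_variance}. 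If your goal is to reproduce the stated constants verbatim you must adopt that expression; with your tighter diameter the same method already yields the sharper threshold $k\ge \sqrt{n}/\sin(\epsilon/2)$ you found.

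Your concern about the preimage $e^{-1}(\mathcal{F}_{\mathbf{M}})$ being a union of sign-symmetric pieces rather than a single hypercube section is legitimate, but the paper does not engage with it: it simply identifies each preimage with one section and bounds the angular variation there.
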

\begin{proof}
\begin{figure}
    \centering\includegraphics[width=0.8\textwidth]{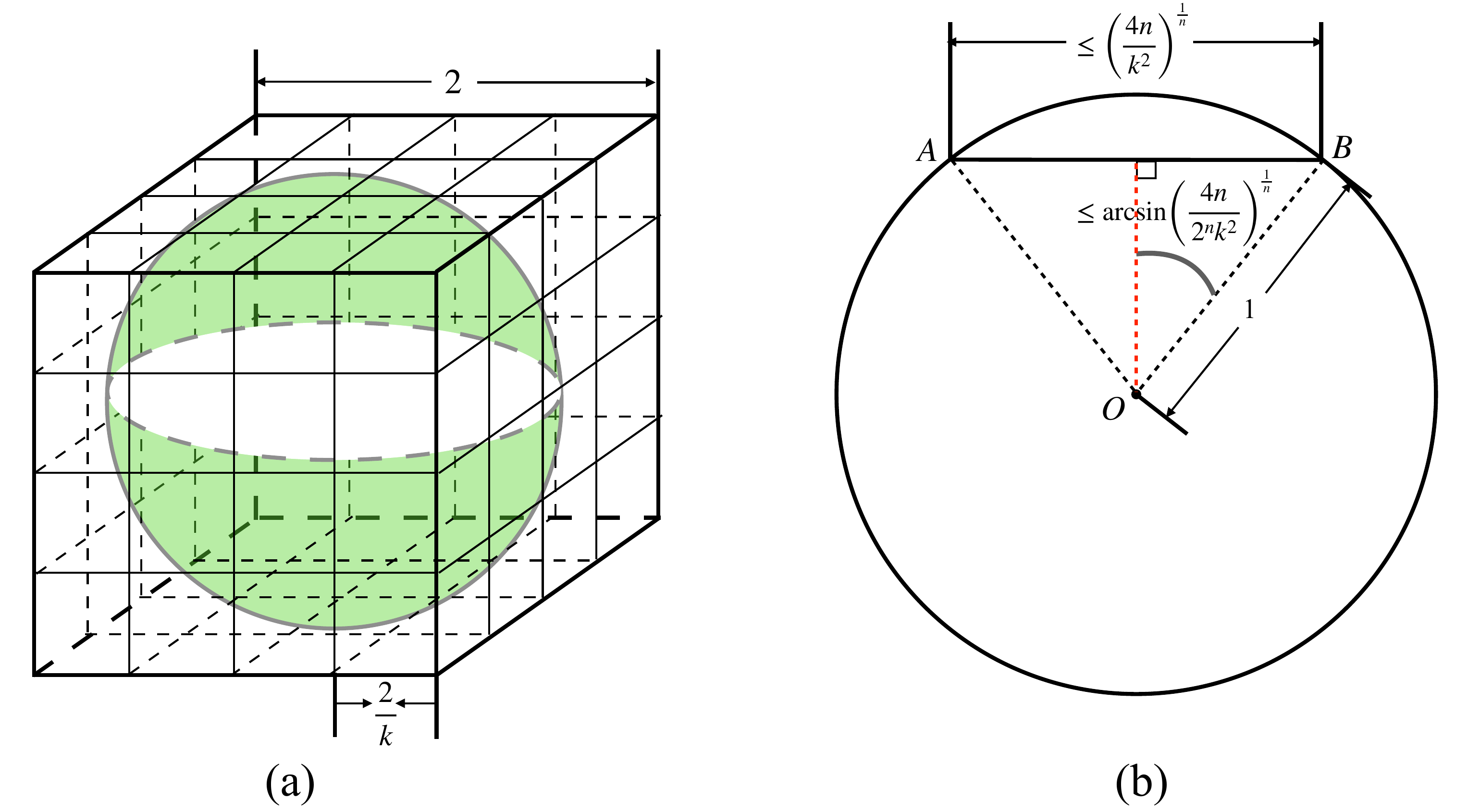}
    \caption{An illustration of key insights for the proof of Theorem~\ref{thm: energy_variance}. (a) a unit sphere being sectioned by $k^n$ identical hypercubes, where each segment corresponds to a $e^{-1}(\mathcal{F}_{M})$ for some NCL matrix $M$. (b) a 2-d tangent plane determined by point 0 and the chord with length smaller than or equal to $(n(\frac{2}{k})^2)^{1/n}$, where the angle of the chord forms is smaller than or equal to $2\arcsin{\frac{1}{2}(\frac{1}{n})^{\frac{1}{n}}(\frac{2}{k})^{\frac{2}{n}}}$.}
    \label{fig:enter-label}
\end{figure}
The largest chord length of a small hypercube sectioning the inscribed sphere is at most as long as the longest diagonal of the hypercube whose length is $(n(\frac{1}{k})^2)^{\frac{1}{n}}$. Therefore, we consider the 2D tangent plane determined by the point $\mathbf{0}$ and the chord with length smaller than or equal to $(n(\frac{2}{k})^2)^{\frac{1}{n}}$, as shown in Figure~\ref{fig:enter-label}(b), the angle formed by $\mathbf{0}$ and the endpoints of the chord is smaller than or equal to $2\arcsin{\frac{1}{2}(\frac{1}{n})^{\frac{1}{n}}(\frac{2}{k})^{\frac{2}{n}}}$. Hence, we have
\begin{equation}
    \begin{aligned}
        \theta_{\max}\leq 2\arcsin{\bigg(\frac{1}{2}(\frac{1}{n})^{\frac{1}{n}}(\frac{2}{k})^{\frac{2}{n}}\bigg)}\leq 2\arcsin{\bigg(\frac{1}{2}e^{\frac{1}{e}}(\frac{2}{k})^{\frac{2}{n}}\bigg)}\leq \pi\bigg(\frac{1}{2}e^{\frac{1}{e}}(\frac{2}{k})^{\frac{2}{n}}\bigg).
    \end{aligned}
\end{equation}
Since we have $k\geq k_0=2(\frac{\pi e^{\frac{1}{e}}}{2})^{\frac{n}{2}}$, we conclude that $\theta_{\max}\leq \theta$.
\end{proof}

Now, we are able to provide an upper bound for the energy distribution variance for EDF $\mathcal{F}_{M}$ of any NCL matrix $M$. 

\begin{theorem}\label{thm: energy_variance}
    Given the conditions of Theorem~\ref{thm: center_angle_thm}, for any eigenvector $v$ of Laplacian amtrix of graph $\mathcal{G}$, we have $\max_{v\in \mathcal{S}^{n-1}, e_u\in \mathcal{F}_{\tau(v)}}{||e_u-e(v)||_2}\leq 2(\frac{4n}{k^2})^{\frac{1}{n}}$, where $\mathcal{S}^{n-1}$ is $n-1$ dimensional unit sphere embedded in $n$ dimensional Euclidean space.
\end{theorem}
\begin{proof}
    Since $e_u\in \mathcal{F}_{\tau(v)}$, then \begin{equation}\max_{v\in \mathcal{S}^{n-1}, e_u\in \mathcal{F}_{\tau(v)}}{||e_u-e(v)||_2}\leq \max_{v\in \mathcal{S}^{n-1};e_u, e_l\in \mathcal{F}_{\tau(v)}}{||e_u-e_l||_2}.\end{equation} By Theorem~\ref{thm: center_angle_thm}, we have \begin{equation}\max_{v\in \mathcal{S}^{n-1}; u,l\in e^{-1}(\mathcal{F}_{\tau(v)})}||u-l||_2\leq (\frac{4n}{k^2})^{\frac{1}{n}}.\end{equation} With the Theorem~\ref{thm: energy_Lipschitz}, we have  \begin{equation}\max_{v\in \mathcal{S}^{n-1}; e_u, e_l\in \mathcal{F}_{\tau(v)}}{||e_u-e_l||_2}\leq 2\max_{v\in \mathcal{S}^{n-1}; u, l\in e^{-1}(\mathcal{F}_{\tau(v)})}{||u-l||_2}.\end{equation} Therefore, we conclude that 
    \begin{equation}
        \max_{v\in \mathcal{S}^{n-1}, e_u\in \mathcal{F}_{\tau(v)}}{||e_u-e(v)||_2}\leq 2(\frac{4n}{k^2})^{\frac{1}{n}}.
    \end{equation} 
\end{proof}
By Theorem~\ref{thm: energy_variance}, given a tolerance of the change of energy distribution from an eigenvector and its corresponding NCL matrix, we can estimate the minimum number of identical interval segments $k$ in the design of NCL matrices to satisfy an arbitrary precision requirement.

\section{Complementary Results of Exploratory Study}
\label{pre_study_complementary}

\begin{figure*}[!t]
\begin{center}
    \includegraphics[scale=0.65]{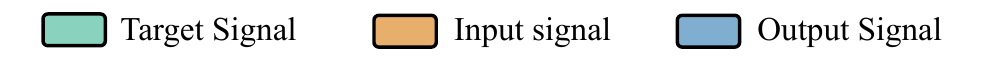}
\end{center}
\centering
            \begin{subfigure}[t]{0.47\textwidth}
        \small
        \includegraphics[width=1.0\textwidth]{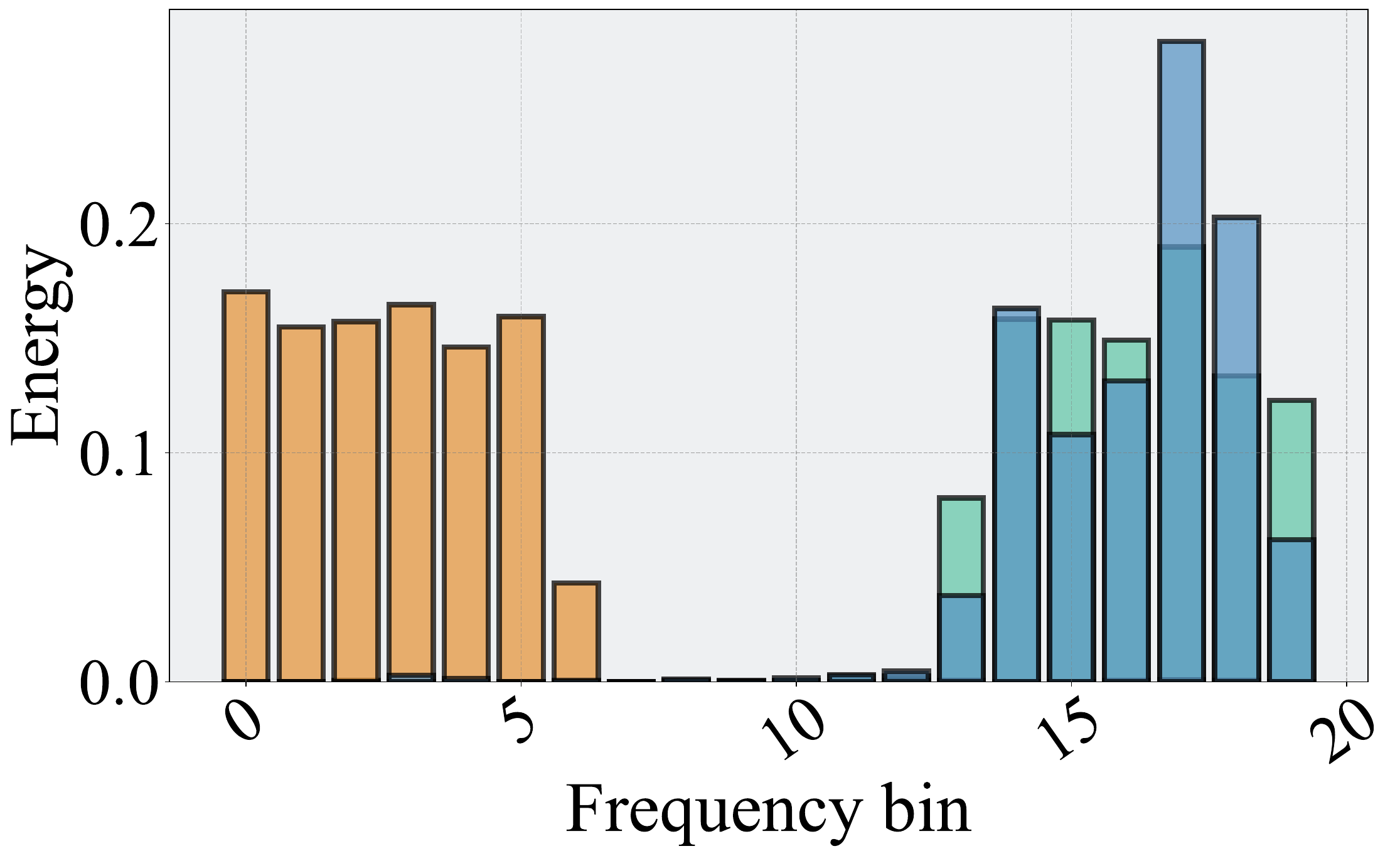}
        \vspace{-5mm}
            \label{pre_exp1}
        \end{subfigure} 
        \begin{subfigure}[t]{0.47\textwidth}
        \small
        \includegraphics[width=1.0\textwidth]{Figures_Tables/preliminary/CS/exp2_CS_GCN_feat_lowtgt_high.pdf}
        \vspace{-5mm}
            \label{pre_exp2}
        \end{subfigure}
            \vspace{-1mm} 
    \begin{subfigure}[t]{0.47\textwidth}
        \small
        \includegraphics[width=1.0\textwidth]{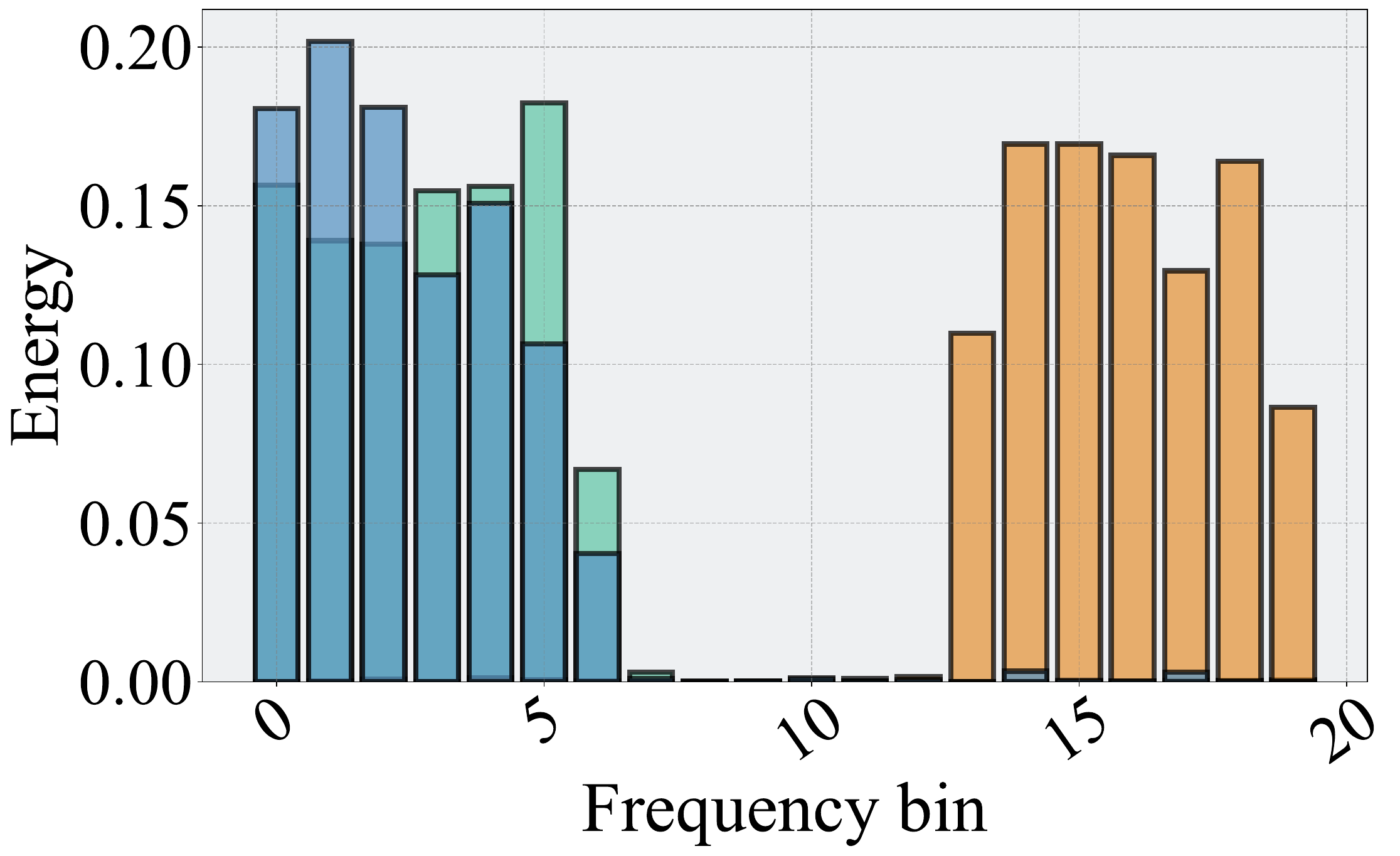}
        \vspace{-5mm}
            \caption[Network2]%
            {{FA}}   
            \label{pre_exp1}
        \end{subfigure} 
        \begin{subfigure}[t]{0.47\textwidth}
        \small
        \includegraphics[width=1.0\textwidth]{Figures_Tables/preliminary/CS/exp2_CS_GCN_feat_hightgt_low.pdf}
        \vspace{-5mm}
            \caption[Network2]%
            {{GCN}} 
            \label{pre_exp2}
        \end{subfigure}
    \caption{Energy distributions for CS.}
    \label{fig: energy_cs}
\end{figure*}

\begin{figure*}[!t]
\centering
    \begin{subfigure}[t]{0.47\textwidth}
        \small
        \includegraphics[width=1.0\textwidth]{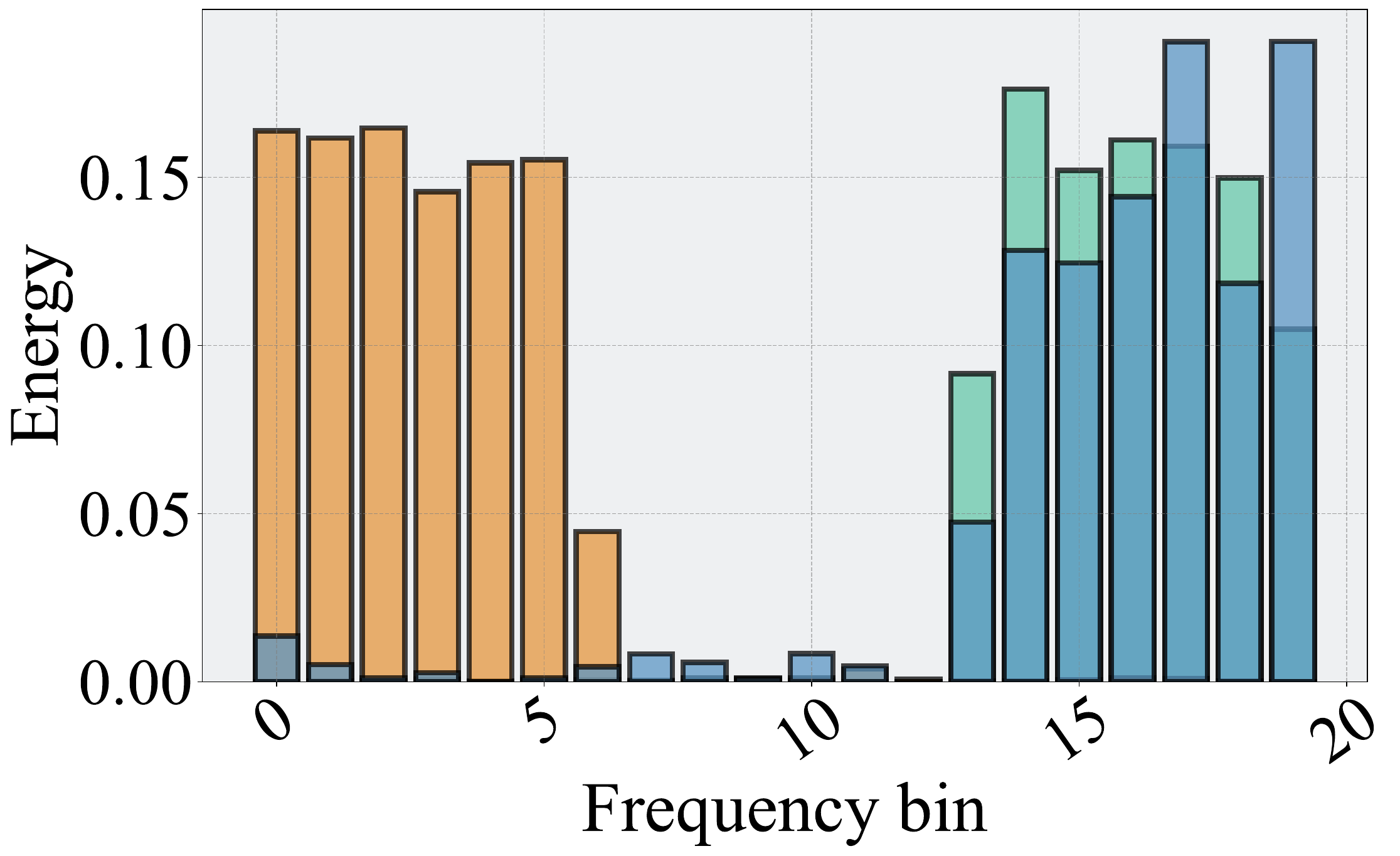}
        \vspace{-5mm}
            \label{pre_exp1}
        \end{subfigure} 
        \begin{subfigure}[t]{0.47\textwidth}
        \small
        \includegraphics[width=1.0\textwidth]{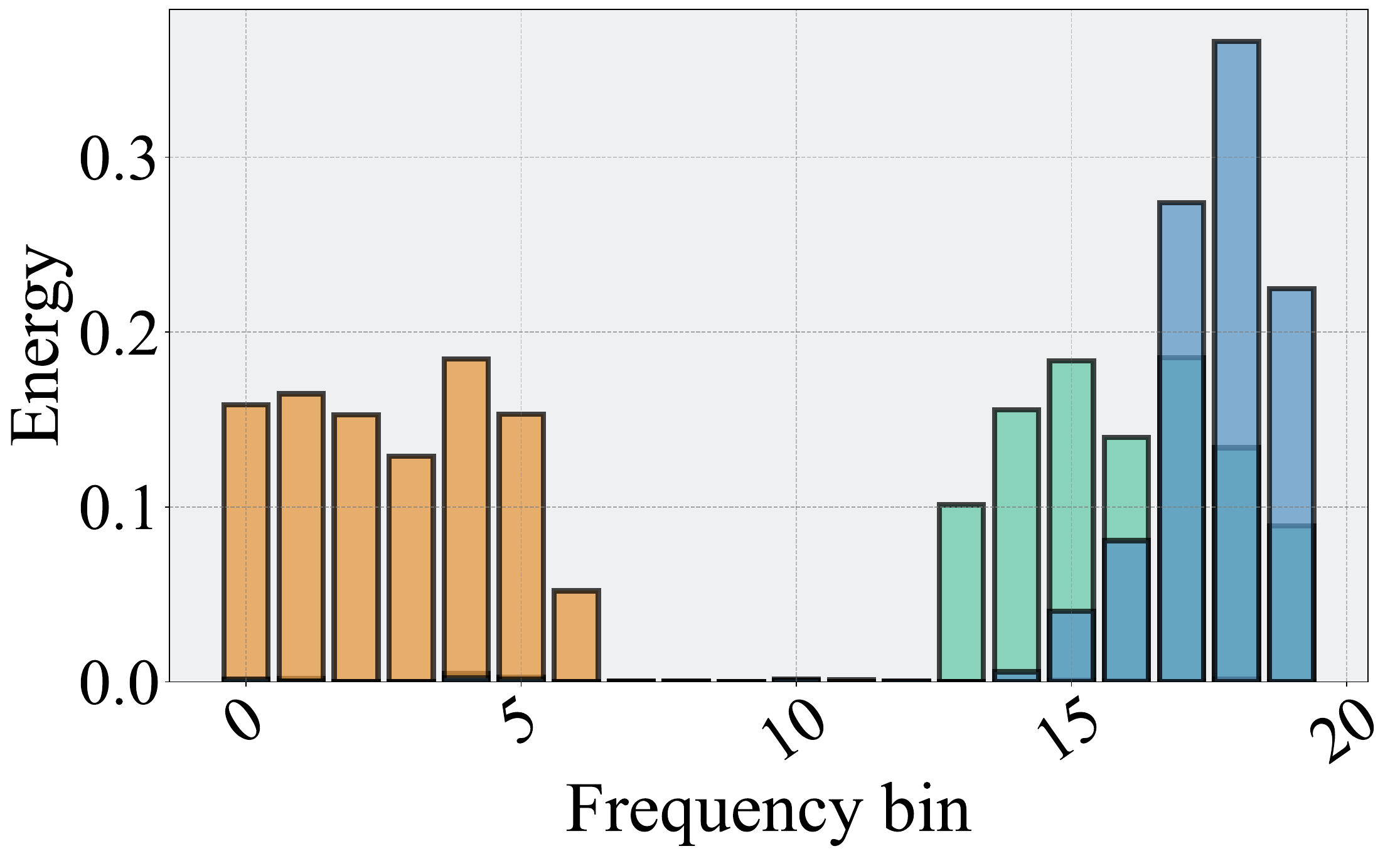}
        \vspace{-5mm}
            \label{pre_exp2}
        \end{subfigure}
            \vspace{-1mm} 
    \begin{subfigure}[t]{0.47\textwidth}
        \small
        \includegraphics[width=1.0\textwidth]{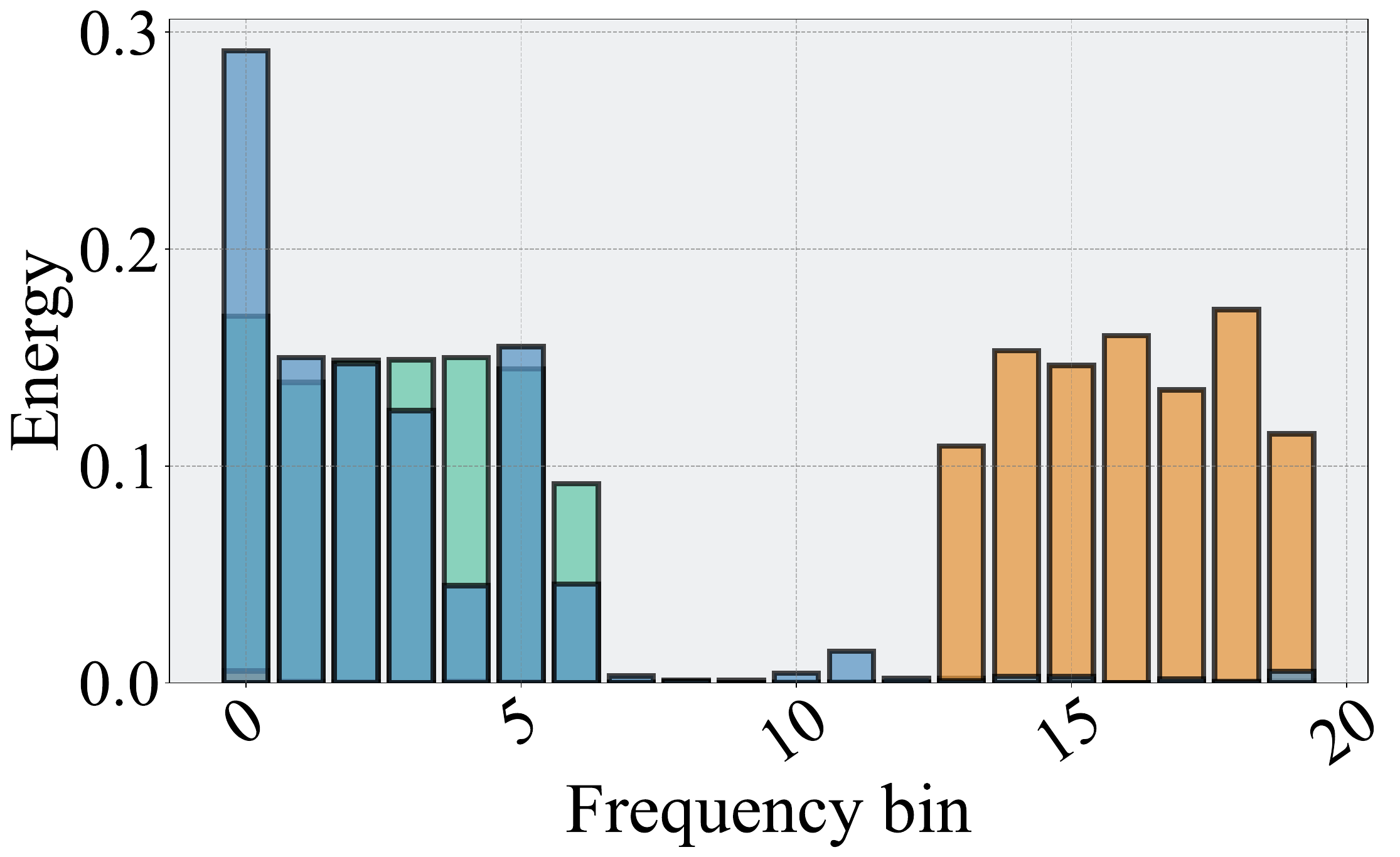}
        \vspace{-5mm}
            \caption[Network2]%
            {{FA}}   
            \label{pre_exp1}
        \end{subfigure} 
        \begin{subfigure}[t]{0.47\textwidth}
        \small
        \includegraphics[width=1.0\textwidth]{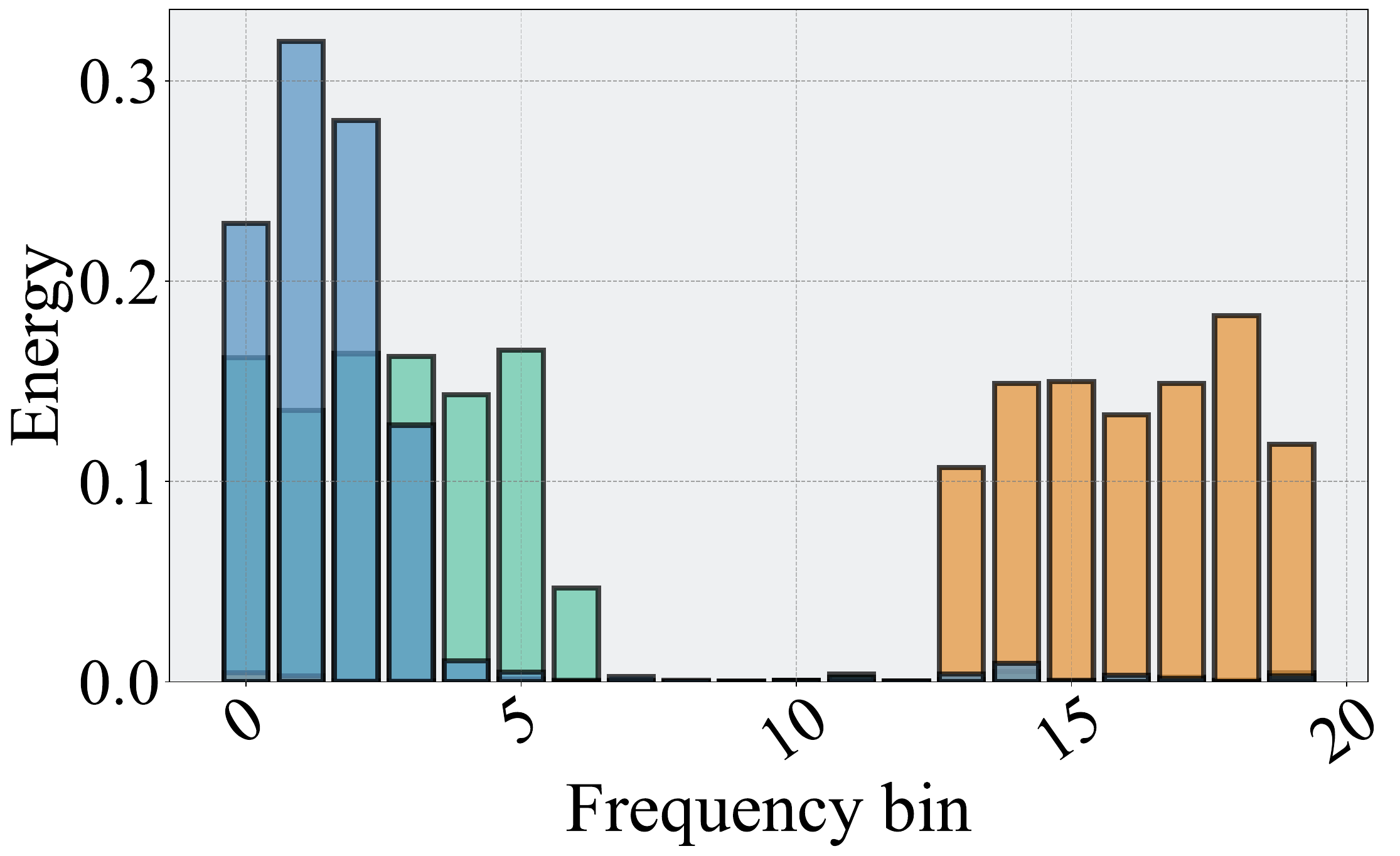}
        \vspace{-5mm}
            \caption[Network2]%
            {{GCN}} 
            \label{pre_exp2}
        \end{subfigure}
    \caption{Energy distributions for Computers.}
    \vspace{-5mm} 
    \label{fig: energy_computers}
\end{figure*}

\begin{figure*}[!t]
\centering
    \begin{subfigure}[t]{0.47\textwidth}
        \small
        \includegraphics[width=1.0\textwidth]{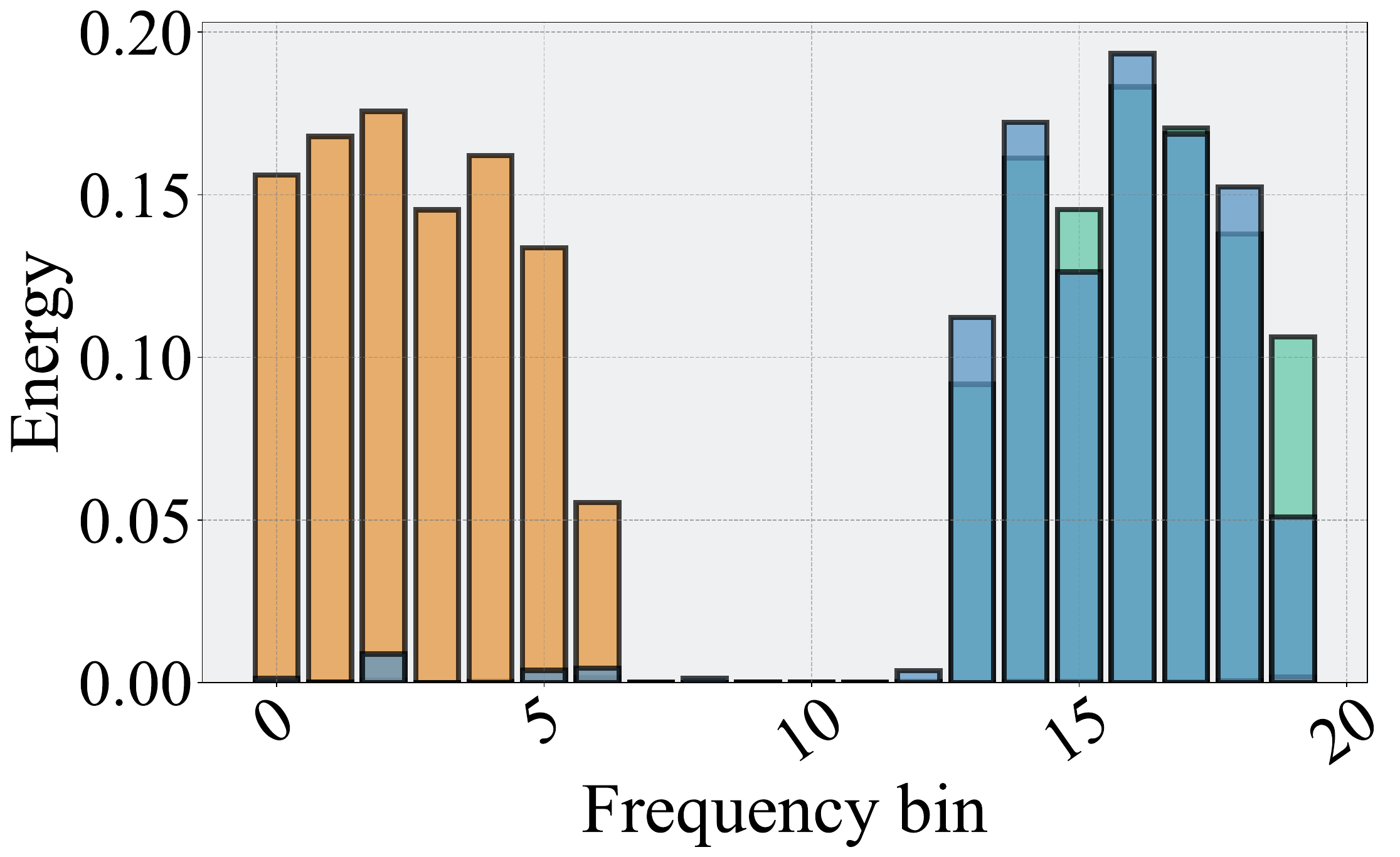}
        \vspace{-5mm}
            \label{pre_exp1}
        \end{subfigure} 
        \begin{subfigure}[t]{0.47\textwidth}
        \small
        \includegraphics[width=1.0\textwidth]{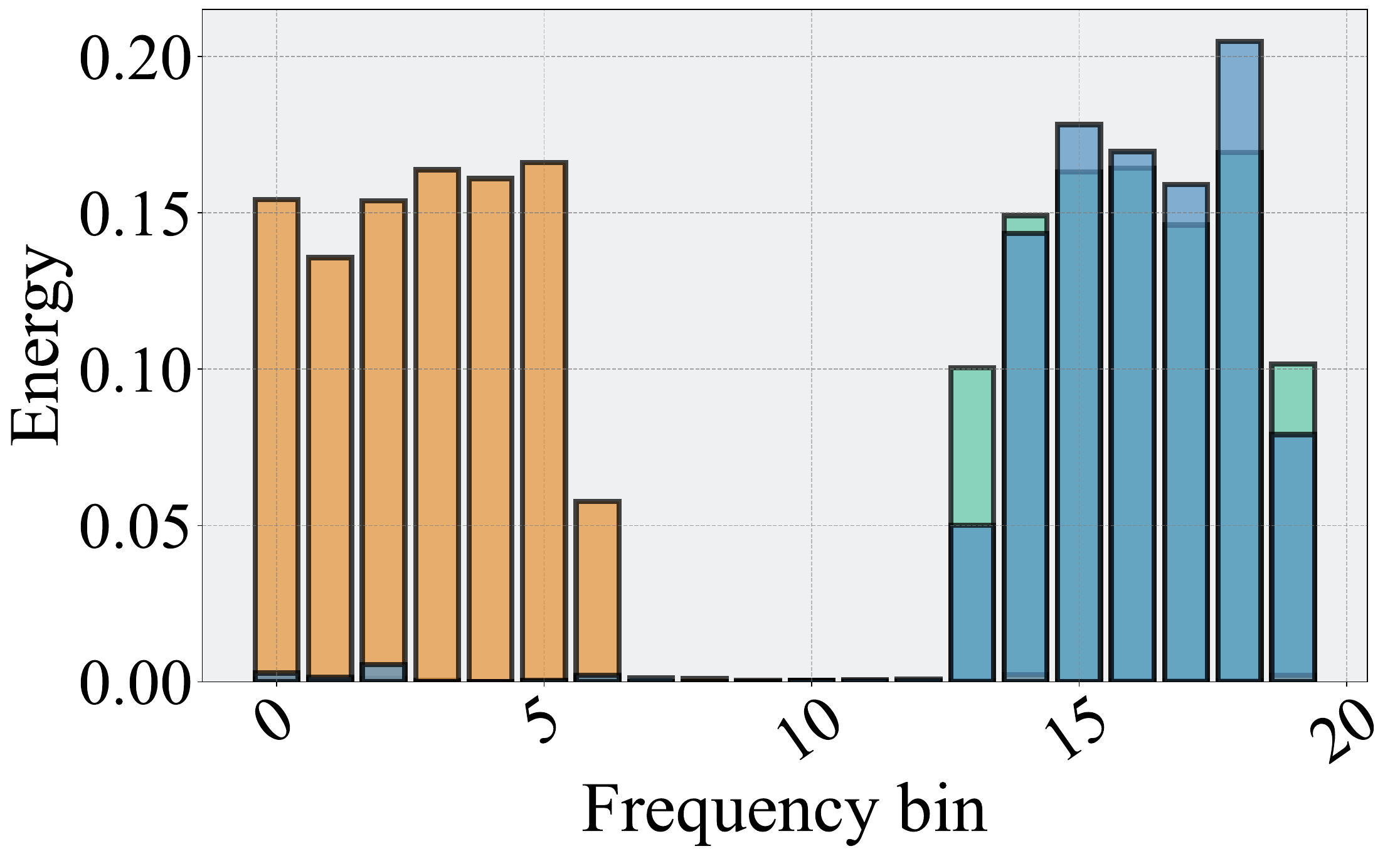}
        \vspace{-5mm}
            \label{pre_exp2}
        \end{subfigure}
            \vspace{-1mm} 
    \begin{subfigure}[t]{0.47\textwidth}
        \small
        \includegraphics[width=1.0\textwidth]{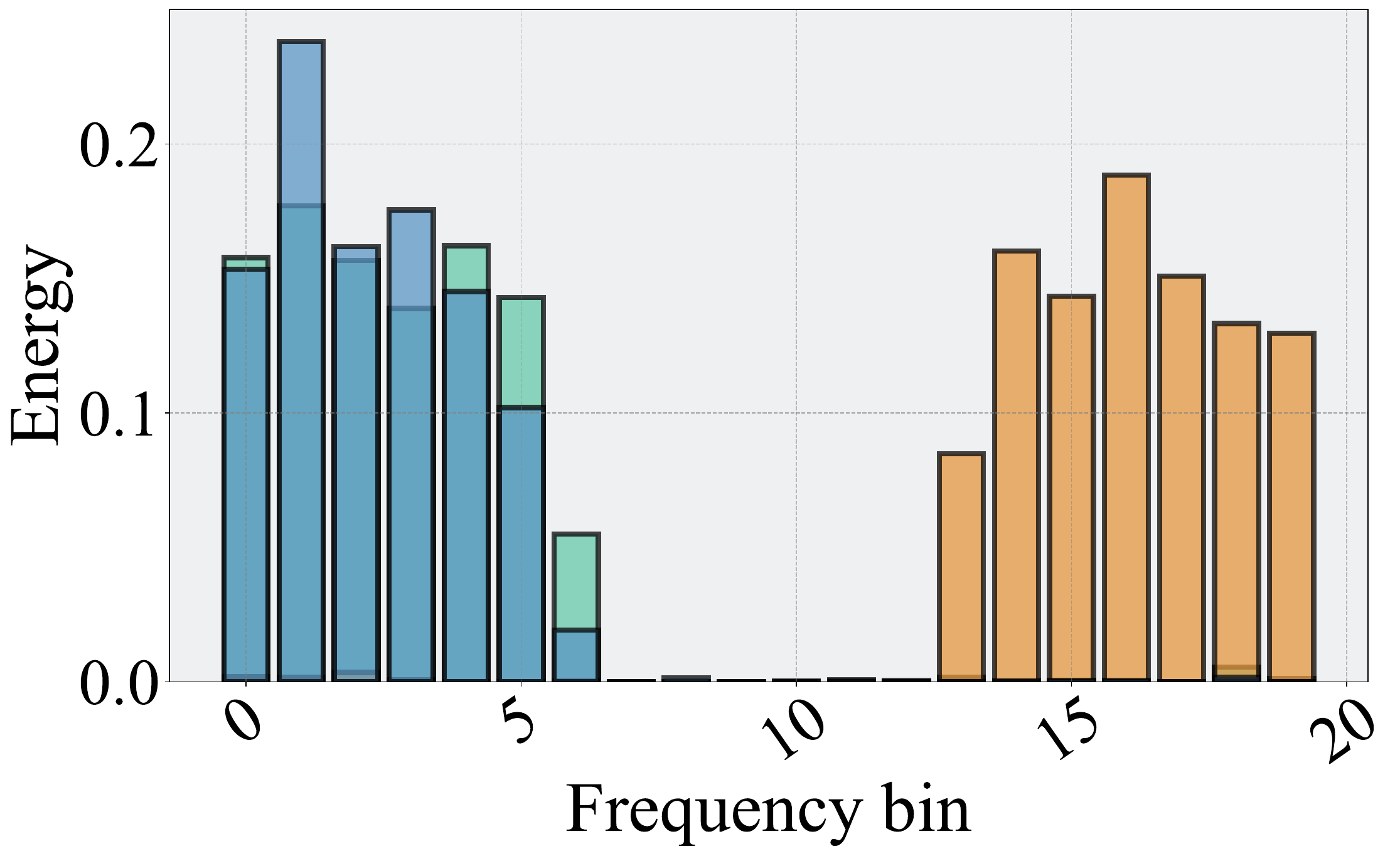}
        \vspace{-5mm}
            \caption[Network2]%
            {{FA}}   
            \label{pre_exp1}
        \end{subfigure} 
        \begin{subfigure}[t]{0.47\textwidth}
        \small
        \includegraphics[width=1.0\textwidth]{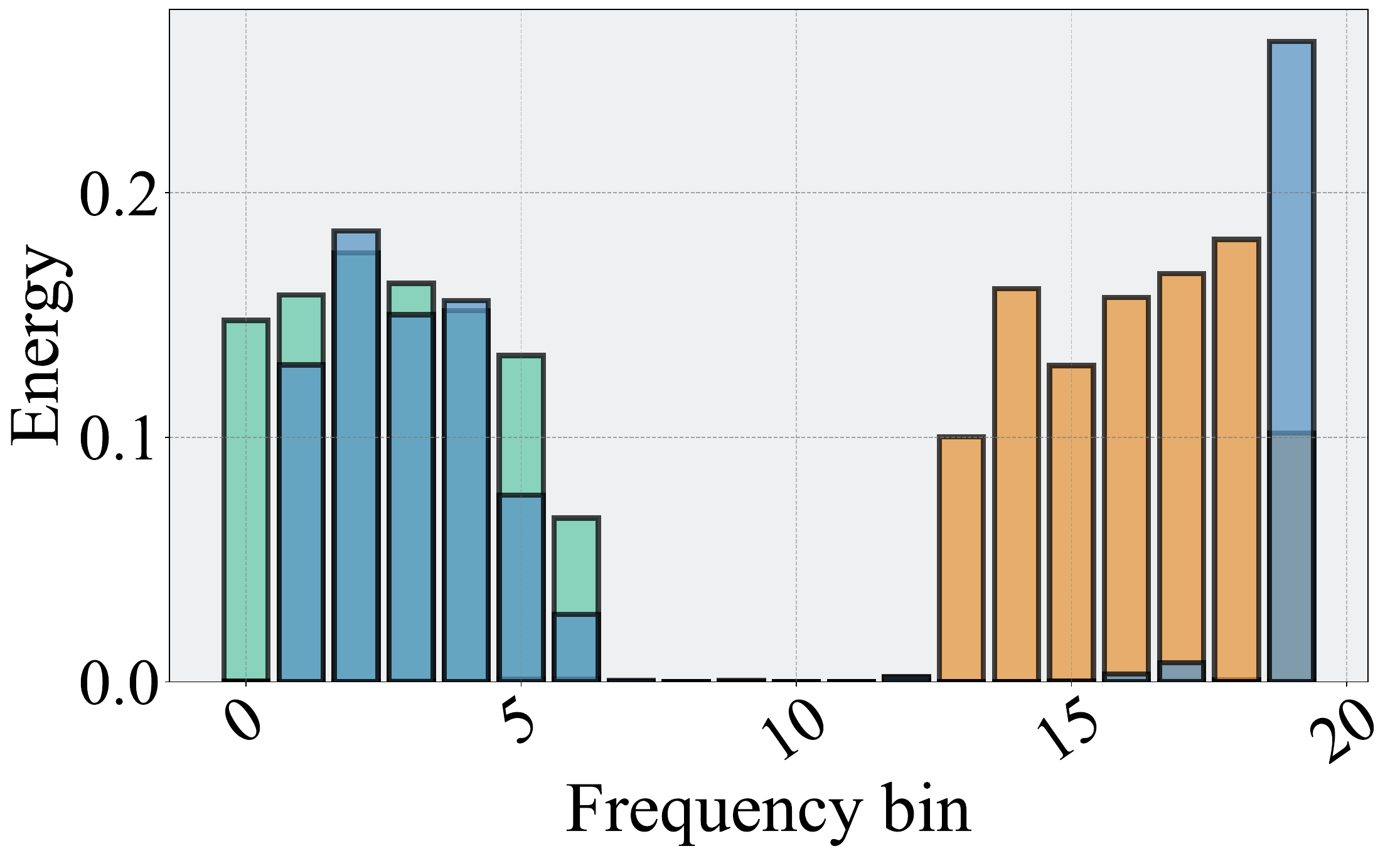}
        \vspace{-5mm}
            \caption[Network2]%
            {{GCN}} 
            \label{pre_exp2}
        \end{subfigure}
    \caption{Energy distributions for Cora-Full.}
    \vspace{-5mm} 
    \label{fig: energy_cora_full}
\end{figure*}

\begin{figure*}[!t]
\centering
    \begin{subfigure}[t]{0.47\textwidth}
        \small
        \includegraphics[width=1.0\textwidth]{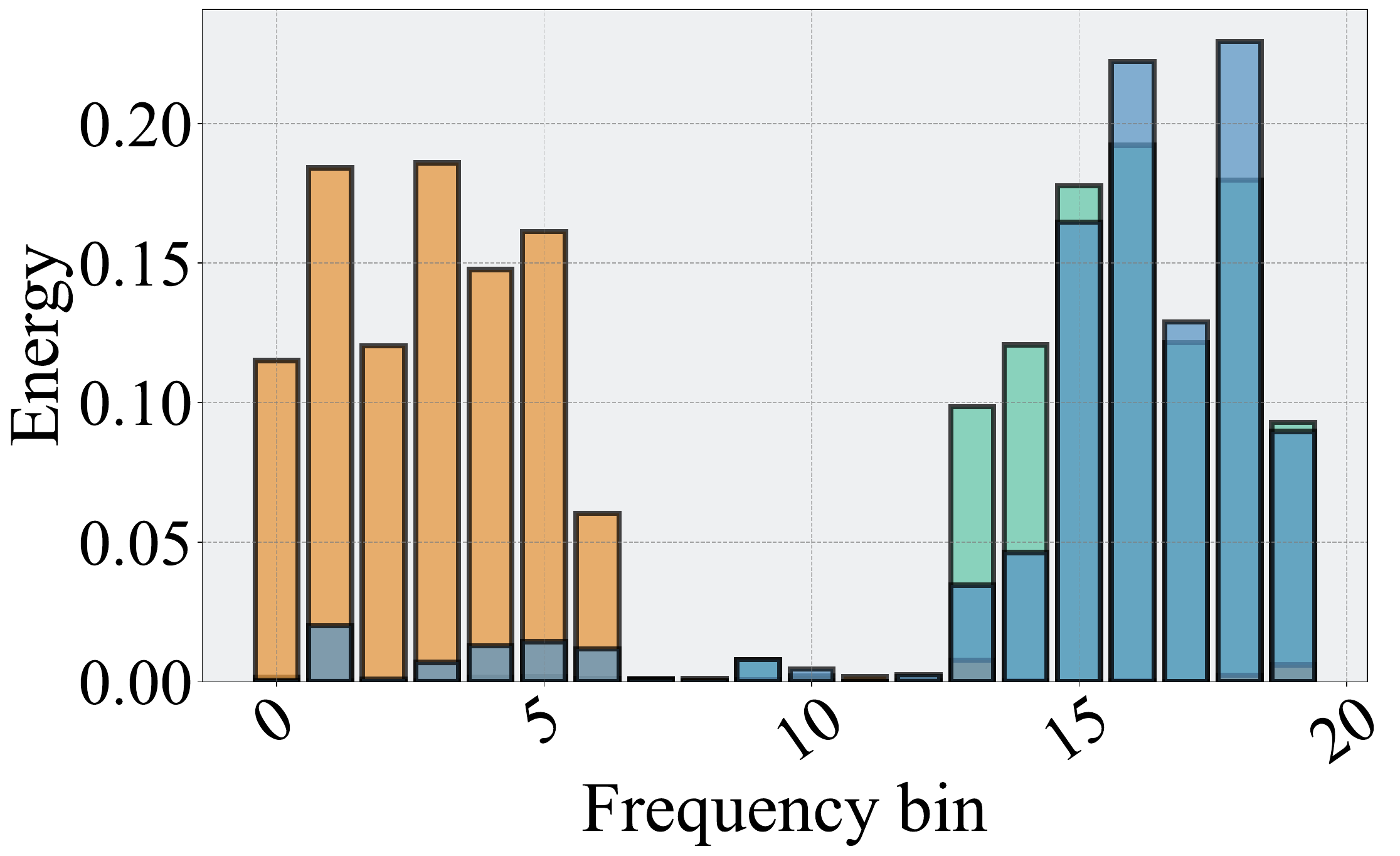}
        \vspace{-5mm}
            \label{pre_exp1}
        \end{subfigure} 
        \begin{subfigure}[t]{0.47\textwidth}
        \small
        \includegraphics[width=1.0\textwidth]{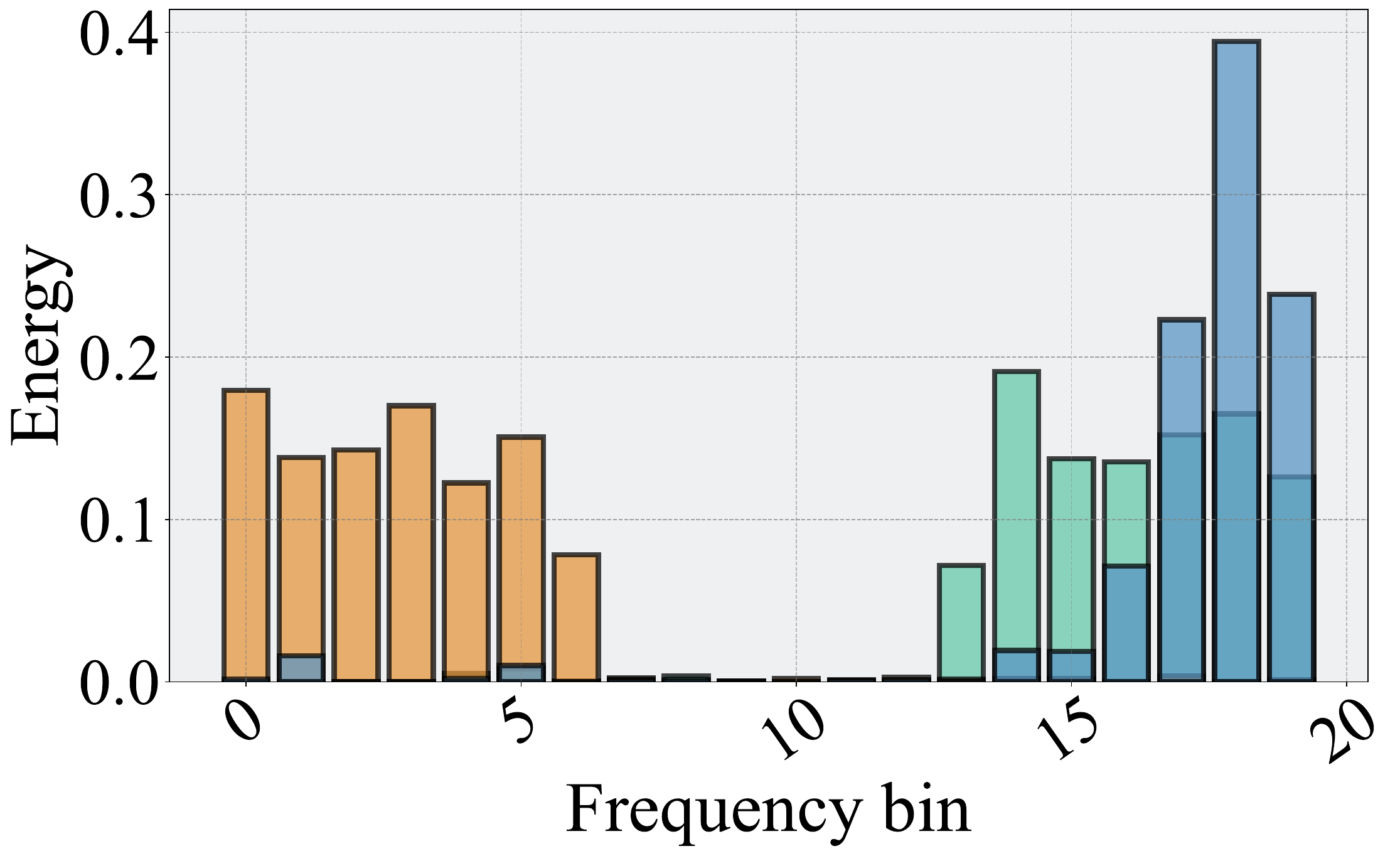}
        \vspace{-5mm}
            \label{pre_exp2}
        \end{subfigure}
            \vspace{-1mm} 
    \begin{subfigure}[t]{0.47\textwidth}
        \small
        \includegraphics[width=1.0\textwidth]{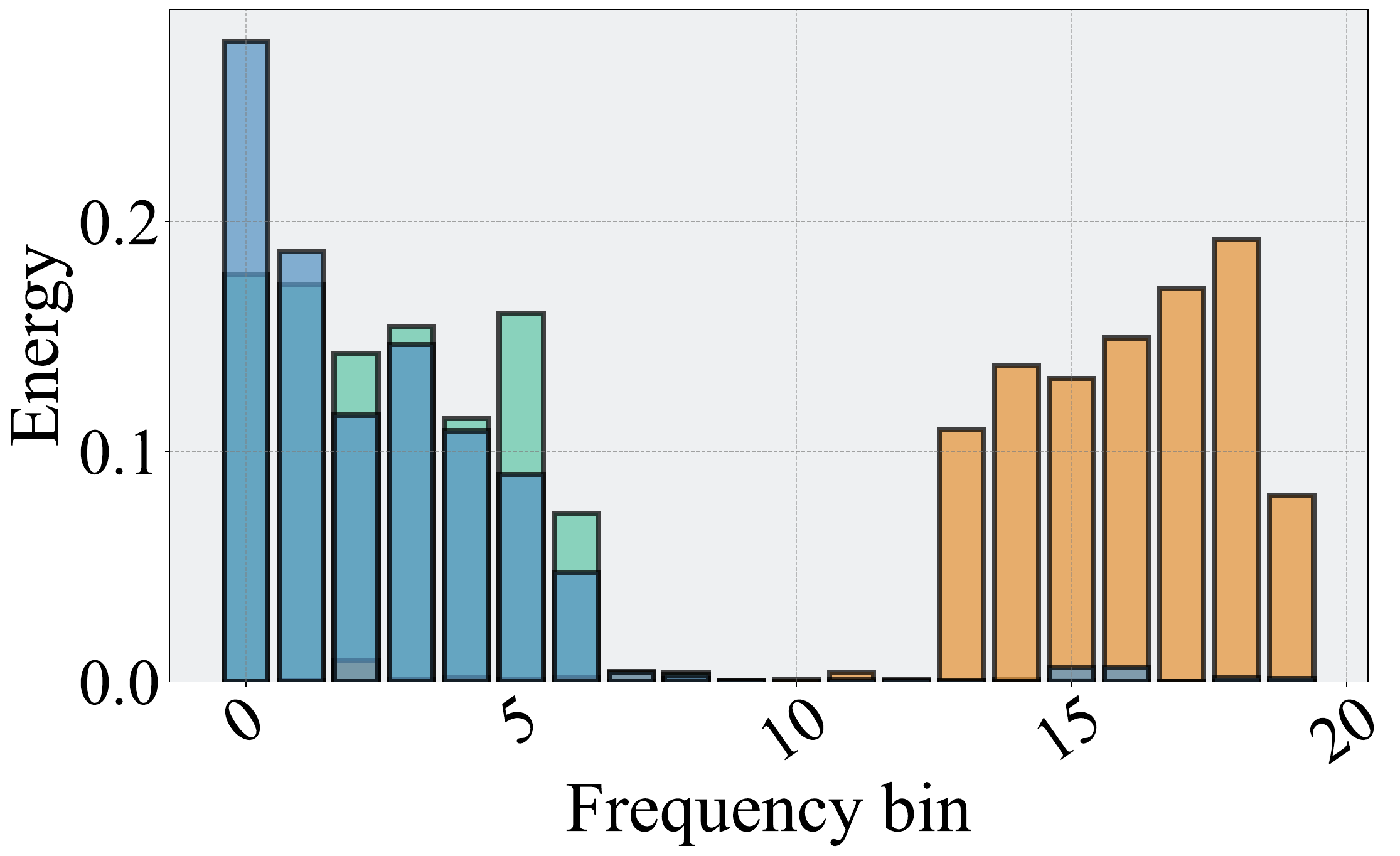}
        \vspace{-5mm}
            \caption[Network2]%
            {{FA}}   
            \label{pre_exp1}
        \end{subfigure} 
        \begin{subfigure}[t]{0.47\textwidth}
        \small
        \includegraphics[width=1.0\textwidth]{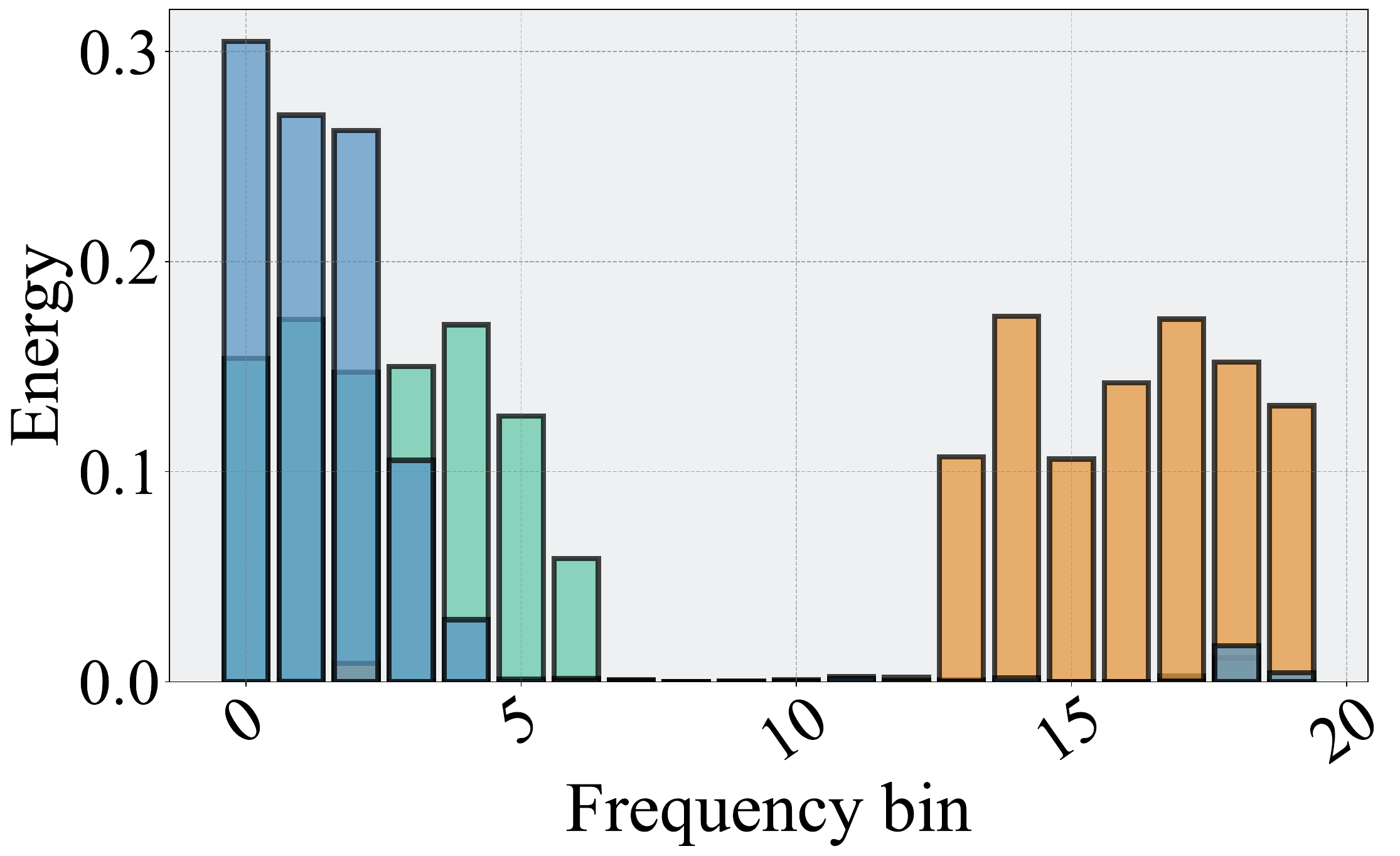}
        \vspace{-5mm}
            \caption[Network2]%
            {{GCN}} 
            \label{pre_exp2}
        \end{subfigure}
    \caption{Energy distributions for Amazon-Photo.}
    \vspace{-5mm} 
    \label{fig: energy_amazon_photo}
\end{figure*}

\begin{figure*}[!t]
\centering
    \begin{subfigure}[t]{0.47\textwidth}
        \small
        \includegraphics[width=1.0\textwidth]{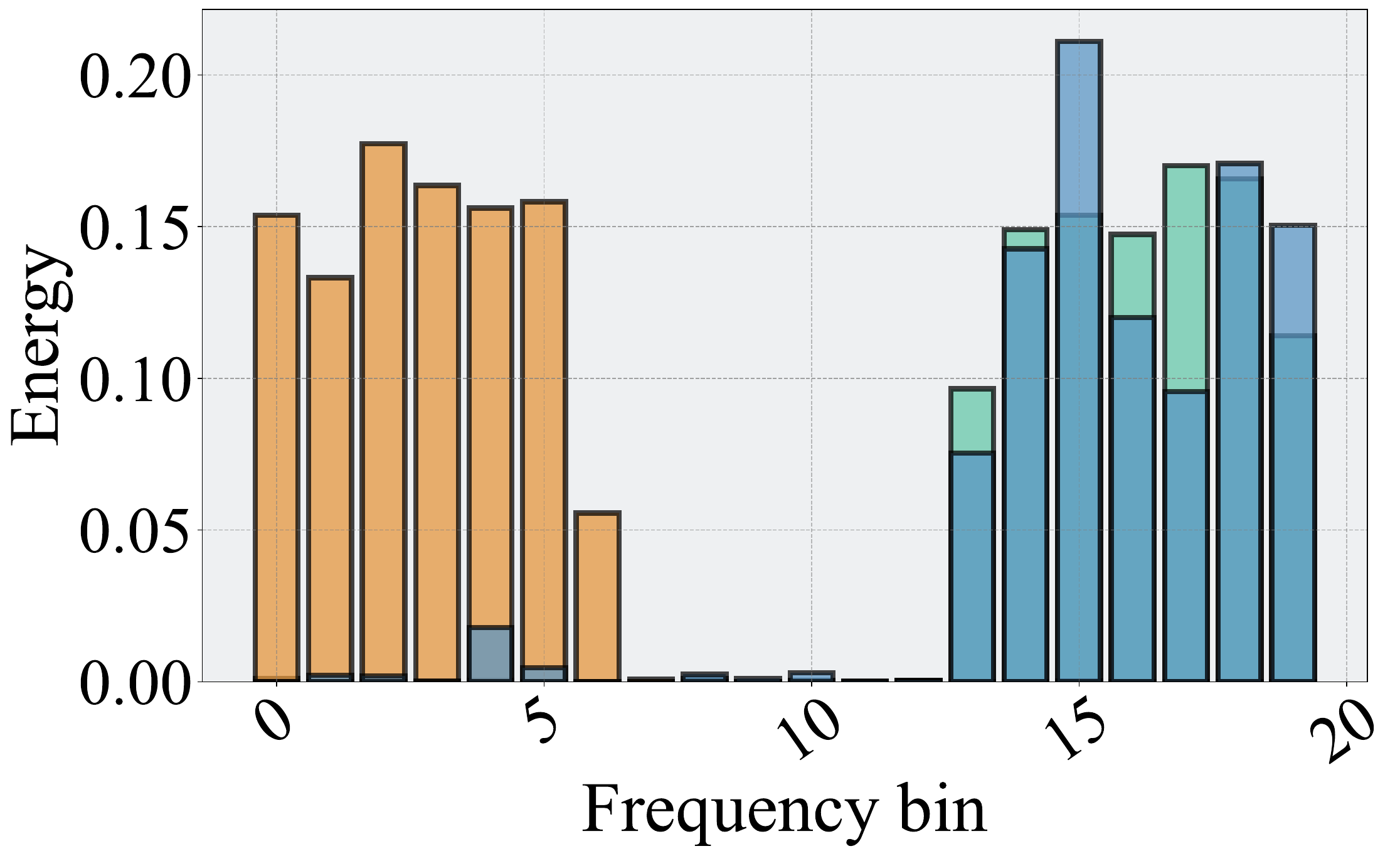}
        \vspace{-5mm}
            \label{pre_exp1}
        \end{subfigure} 
        \begin{subfigure}[t]{0.47\textwidth}
        \small
        \includegraphics[width=1.0\textwidth]{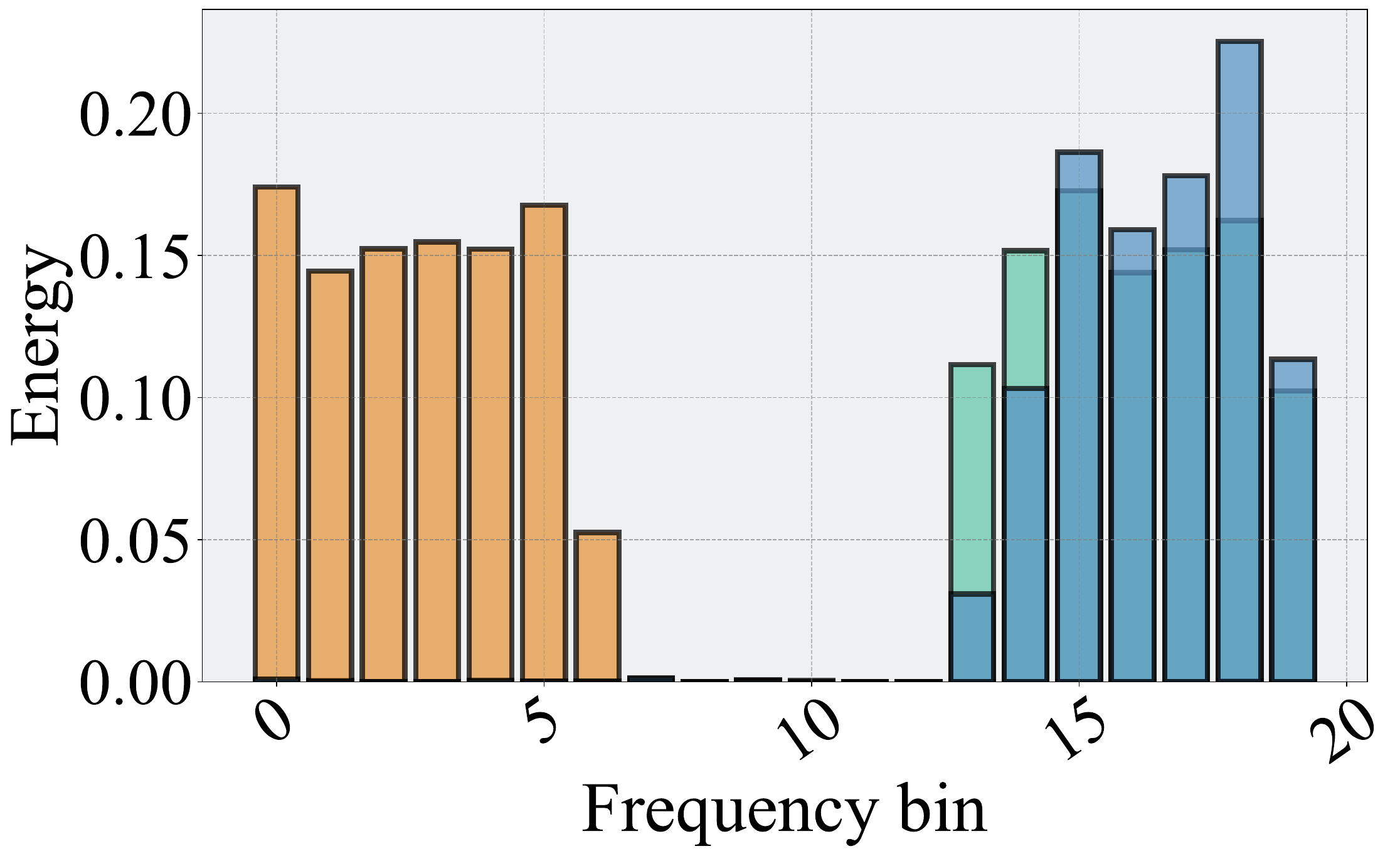}
        \vspace{-5mm}
            \label{pre_exp2}
        \end{subfigure}
            \vspace{-1mm} 
    \begin{subfigure}[t]{0.47\textwidth}
        \small
        \includegraphics[width=1.0\textwidth]{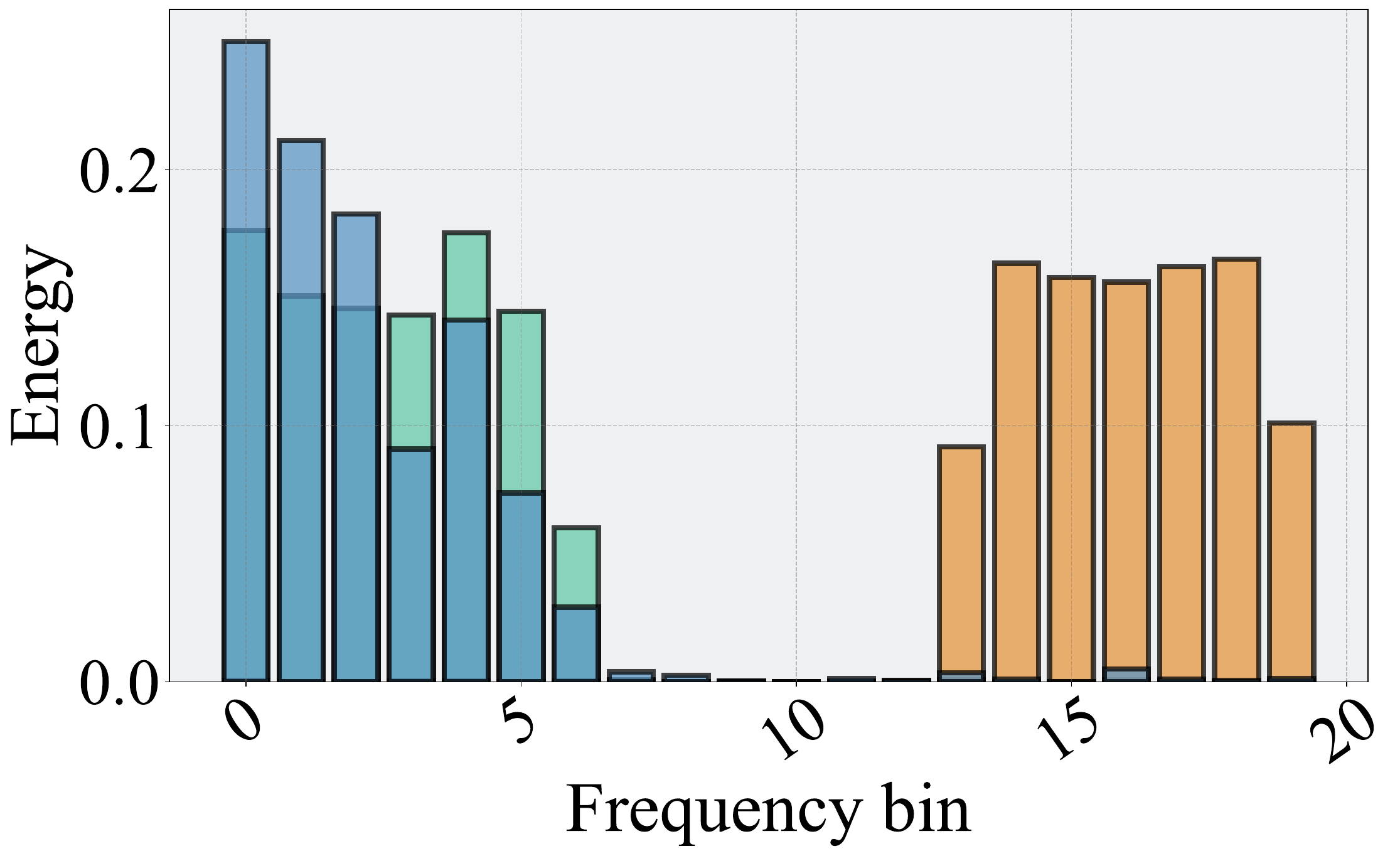}
        \vspace{-5mm}
            \caption[Network2]%
            {{FA}}   
            \label{pre_exp1}
        \end{subfigure} 
        \begin{subfigure}[t]{0.47\textwidth}
        \small
        \includegraphics[width=1.0\textwidth]{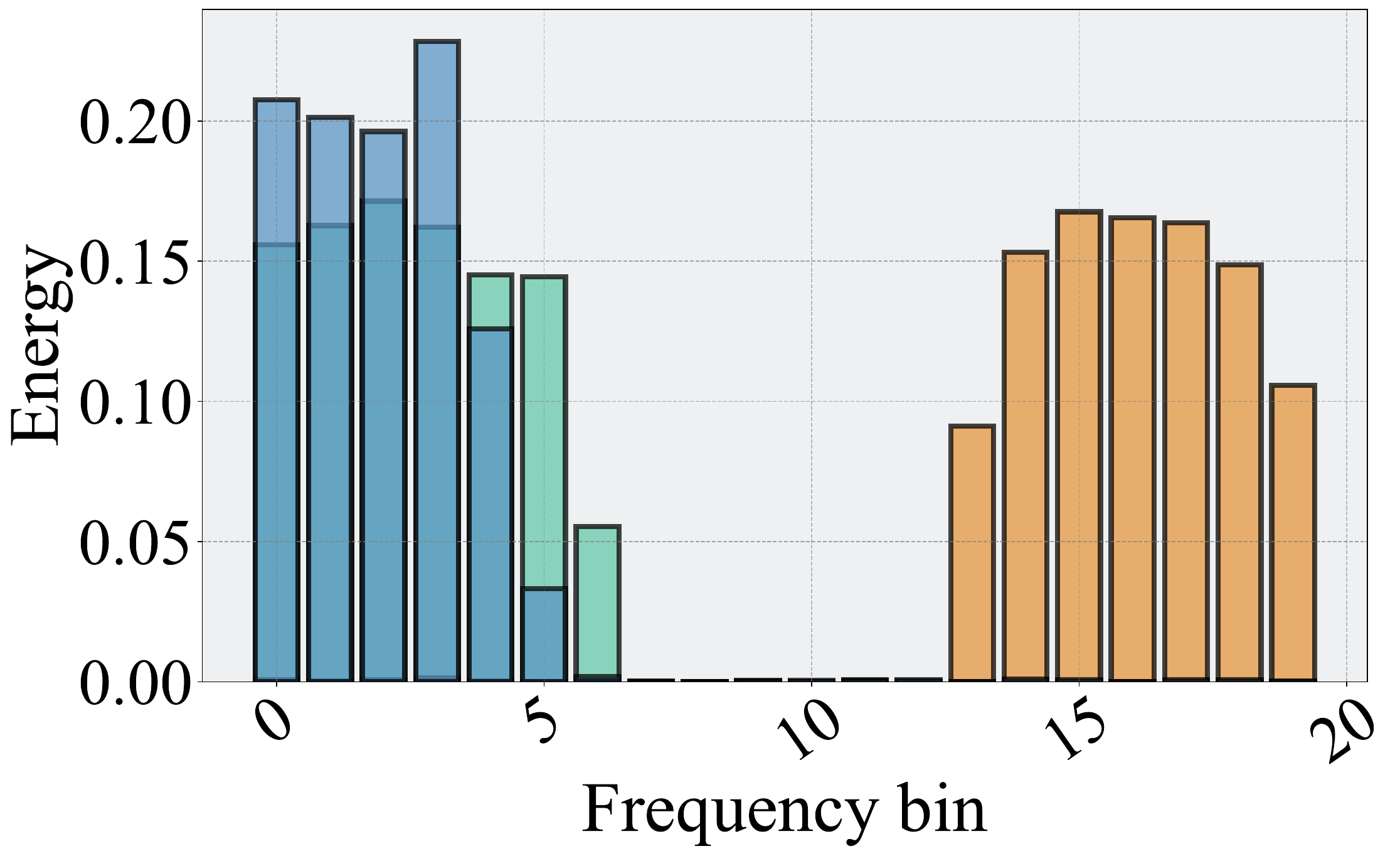}
        \vspace{-5mm}
            \caption[Network2]%
            {{GCN}} 
            \label{pre_exp2}
        \end{subfigure}
    \caption{Energy distributions for Physics.}
    \vspace{-5mm} 
    \label{fig: energy_physics}
\end{figure*}

Figures~\ref{fig: energy_cs},~\ref{fig: energy_computers},~\ref{fig: energy_cora_full},~\ref{fig: energy_amazon_photo},~\ref{fig: energy_physics} illustrates a comparison between the energy distributions of input and output frequency components of FA and GCN on the CS, Computers, Cora Full, Photo and Physics datasets. The plots depict the energy across 20 frequency bins, highlighting the distribution of input signals (orange), target signals (green), and output signals (blue). 

 We observe that FA and GCN performs similarly across all the examined datasets. Specifically, in each of those figures, the first subplot (top left) shows the FA model receiving an input concentrated in the lower frequency bins while the target signal is predominantly in the higher frequencies. Despite this discrepancy, the model's output aligns closely with the target frequency, successfully capturing the desired high-frequency components. Similarly, in the top right, the GCN model, which also starts with low-frequency input, produces an output that matches the high-frequency target distribution, demonstrating a similar flexibility. The lower two plots reverse the scenario: both models are now tasked with generating low-frequency outputs from high-frequency inputs. The FA model (bottom left) manages to shift the energy toward the low frequencies, although it retains some high-frequency characteristics from the input. Likewise, the GCN model (bottom right) shows a similar ability to recover low-frequency components, albeit with a smoother transition compared to FA.

These observations reveal that both models, FA and GCN, are capable of adjusting their output to align with the target distribution, even when the input and target signals differ significantly. This flexibility suggests that GNNs are not constrained solely by their neighborhood aggregation mechanisms in the spectral domain. Instead, other components, such as the non-linear layers in GNNs, play a crucial role in shaping the output, allowing the models to respond to frequency-specific incentives in the supervision signal.

\section{Complementary Results for Qualitative Performance}

\label{quantitative_complementary}

\begin{figure}
    \centering
    \includegraphics[width=1\linewidth]{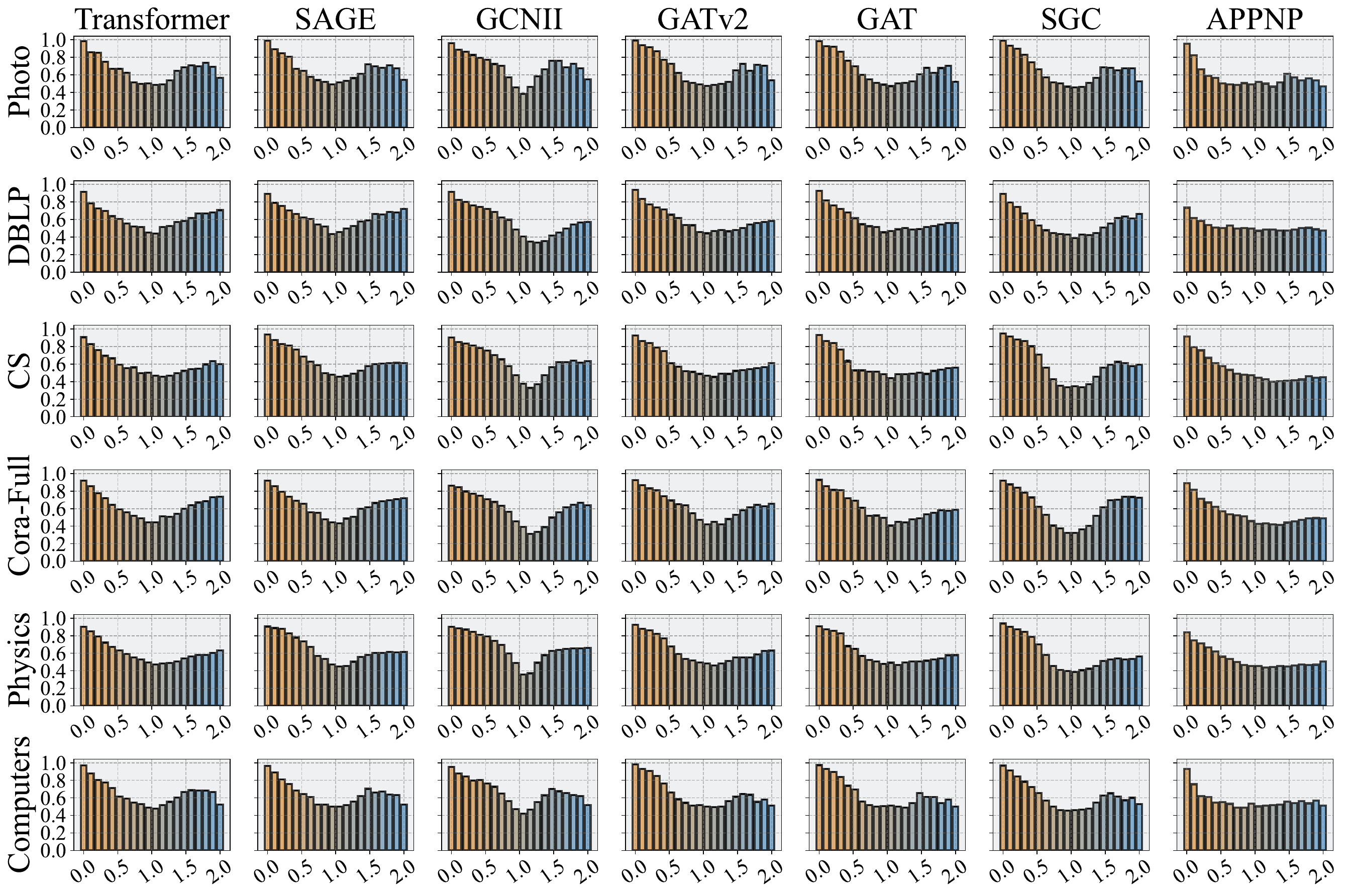}
    \caption{The accuracy curves of different GNNs (Transformer, SAGE, GNNII, GATv2, GAT, SGC, APPNP) in the whole spectral domain. In each subplot, the $x$-axis represents the frequency and the $y$-axis represents the accuracy of GNNs in the node classification task with the ground truth labels derived from the associated frequency bin.}
    \label{fig:main_1}
\end{figure}
\begin{figure}
    \centering
    \includegraphics[width=\linewidth]{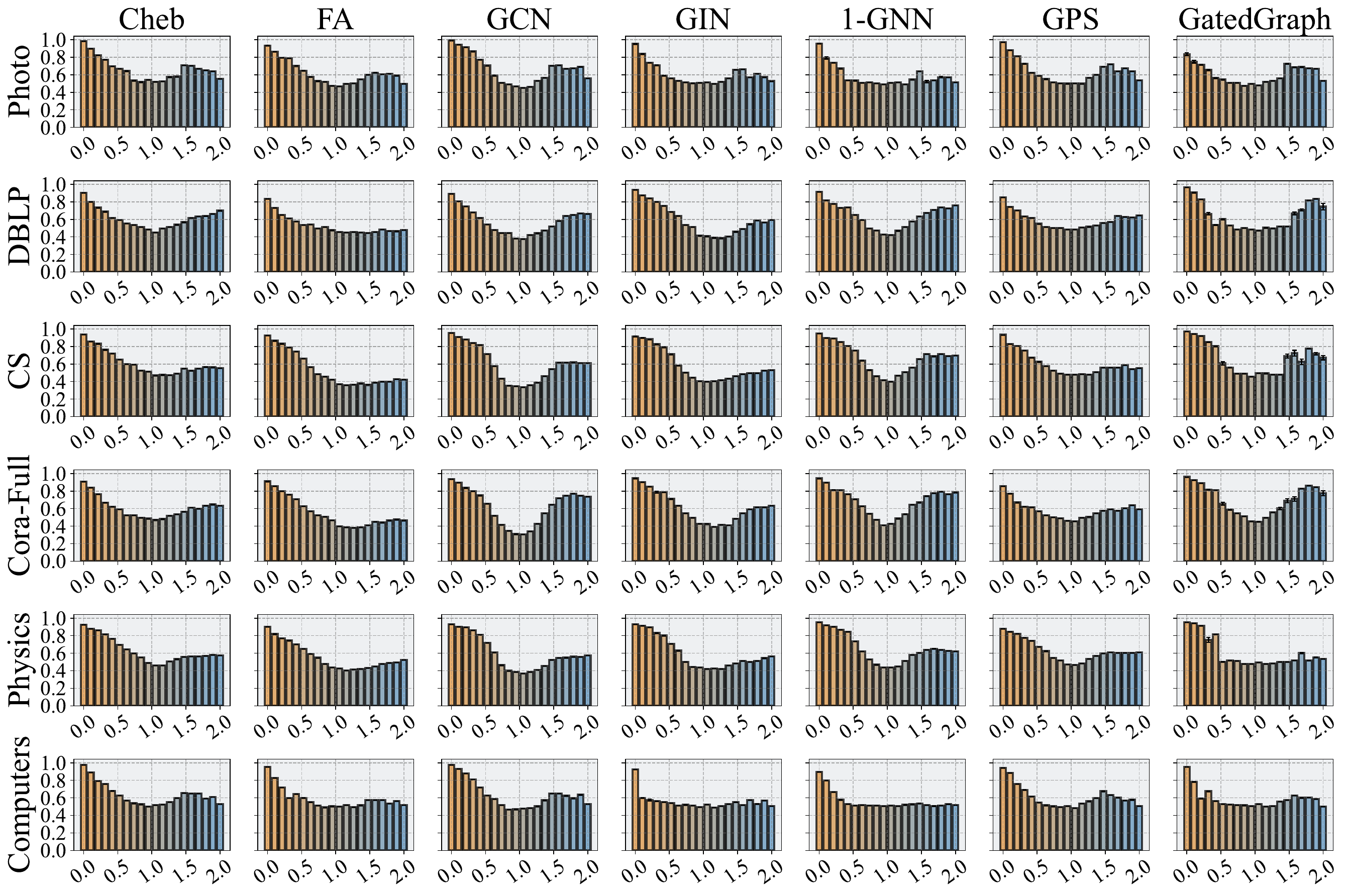}
    \caption{The accuracy curves of different GNNs (Cheb, FA, GCN, GIN, 1-GNN, GPS and GatedGraph) in the whole spectral domain. In each subplot, the $x$-axis represents the frequency and the $y$-axis represents the accuracy of GNNs in the node classification task with the ground truth labels derived from the associated frequency bin.}
    \label{fig:main_2}
\end{figure}
We show the complete version of Fig.~\ref{fig:main-benchmark} with more GNNs and datasets involved in Fig.~\ref{fig:main_1} and Fig.~\ref{fig:main_2}. We have the observations as follows, which are consistent with the conclusion we draw from the Section~\ref{subsec:quala_perf}. Generally, we observe that the V-shape curves are maintained across all GNNs in almost all datasets. The phenomenon indicates GNNs' strong ability in capturing the information encoded in the lowest and highest frequency components (e.g., those associate with the smallest and largest frequency values). We have provided a detailed rationale in Section~\ref{subsec:quala_perf}. Note that we also detected a few cases such as 1-GNN on Photo and FA on DBLP which we consider to be outliers as their curves may be noisy in terms of accuracy variance, which hinders their frequency-capturing ability in the high frequency range. As expected, each GNN finds the most difficulty in recovering task-relevant information in the middle frequency components of each dataset.

\section{Complimentary Results for Quantitative Performance Comparison in the Spectral Domain}

\begin{table}[]
\setlength{\tabcolsep}{2.4pt}
\vspace{-4mm}
\renewcommand{\arraystretch}{1.2}
\caption{AUC for the lower third of the spectrum (multiplied by 10)}
\label{table:lower-third}
\footnotesize
\begin{tabular}{c|c|c|c|c|c|c|>{\columncolor[gray]{0.9}}c}
\toprule
\textbf{Model}    & \textbf{Computers}    & \textbf{Cora}    & \textbf{CS}    & \textbf{DBLP}    & \textbf{Photo}    & \textbf{Physics}    & \textbf{Avg. Ranking} \\ \midrule
APPNP & 66.92 \scriptsize{$\pm$ 2.22} & 71.35 \scriptsize{$\pm$ 1.42} & 72.01 \scriptsize{$\pm$ 1.58} & 57.99 \scriptsize{$\pm$ 0.76} & 68.30 \scriptsize{$\pm$ 3.02} & 69.10 \scriptsize{$\pm$ 0.96} & 13.33 \scriptsize{$\pm$ 0.75} \\
GPS & 73.89 \scriptsize{$\pm$ 2.32} & 68.31 \scriptsize{$\pm$ 1.19} & 76.81 \scriptsize{$\pm$ 1.25} & 68.18 \scriptsize{$\pm$ 1.14} & 76.52 \scriptsize{$\pm$ 2.24} & 78.94 \scriptsize{$\pm$ 0.56} & 11.17 \scriptsize{$\pm$ 1.57} \\
FA & 72.27 \scriptsize{$\pm$ 2.02} & 77.58 \scriptsize{$\pm$ 1.04} & 80.16 \scriptsize{$\pm$ 0.85} & 65.46 \scriptsize{$\pm$ 1.21} & 78.70 \scriptsize{$\pm$ 1.09} & 76.39 \scriptsize{$\pm$ 0.79} & 10.17 \scriptsize{$\pm$ 1.77} \\
Transformer & 79.24 \scriptsize{$\pm$ 1.53} & 75.01 \scriptsize{$\pm$ 1.54} & 74.07 \scriptsize{$\pm$ 1.29} & 72.75 \scriptsize{$\pm$ 1.25} & 79.70 \scriptsize{$\pm$ 1.53} & 76.10 \scriptsize{$\pm$ 1.08} & 9.83 \scriptsize{$\pm$ 2.67} \\
Cheb & 78.66 \scriptsize{$\pm$ 1.66} & 73.22 \scriptsize{$\pm$ 1.58} & 79.22 \scriptsize{$\pm$ 1.08} & 72.16 \scriptsize{$\pm$ 1.37} & 80.59 \scriptsize{$\pm$ 1.41} & 82.35 \scriptsize{$\pm$ 0.69} & 8.83 \scriptsize{$\pm$ 1.77} \\
GatedGraph & 68.20 \scriptsize{$\pm$ 2.59} & \textbf{84.35 \scriptsize{$\pm$ 1.21}} & 84.86 \scriptsize{$\pm$ 1.79} & 74.95 \scriptsize{$\pm$ 3.05} & 67.55 \scriptsize{$\pm$ 1.24} & 81.29 \scriptsize{$\pm$ 2.92} & 7.50 \scriptsize{$\pm$ 4.35} \\
SAGE & 79.10 \scriptsize{$\pm$ 1.50} & 77.50 \scriptsize{$\pm$ 1.01} & 81.61 \scriptsize{$\pm$ 0.76} & 73.69 \scriptsize{$\pm$ 0.95} & 80.81 \scriptsize{$\pm$ 1.72} & 83.48 \scriptsize{$\pm$ 0.46} & 7.17 \scriptsize{$\pm$ 1.34} \\
GAT & \underline{84.58 \scriptsize{$\pm$ 1.20}} & 80.34 \scriptsize{$\pm$ 0.77} & 75.99 \scriptsize{$\pm$ 2.33} & 75.26 \scriptsize{$\pm$ 1.20} & 85.85 \scriptsize{$\pm$ 1.19} & 79.89 \scriptsize{$\pm$ 1.14} & 6.33 \scriptsize{$\pm$ 3.59} \\
GIN & 62.51 \scriptsize{$\pm$ 2.22} & \underline{82.91 \scriptsize{$\pm$ 0.74}} & 83.61 \scriptsize{$\pm$ 0.60} & \textbf{81.31 \scriptsize{$\pm$ 0.77}} & 72.92 \scriptsize{$\pm$ 2.21} & 84.67 \scriptsize{$\pm$ 0.71} & 6.17 \scriptsize{$\pm$ 4.74} \\
SGC & 81.45 \scriptsize{$\pm$ 1.36} & 79.45 \scriptsize{$\pm$ 1.18} & \underline{85.28 \scriptsize{$\pm$ 0.74}} & 70.34 \scriptsize{$\pm$ 1.77} & 84.15 \scriptsize{$\pm$ 1.50} & 84.09 \scriptsize{$\pm$ 0.73} & 5.83 \scriptsize{$\pm$ 2.73} \\
Graph & 66.35 \scriptsize{$\pm$ 2.42} & 82.15 \scriptsize{$\pm$ 0.75} & \textbf{85.67 \scriptsize{$\pm$ 0.50}} & 77.07 \scriptsize{$\pm$ 0.79} & 70.36 \scriptsize{$\pm$ 2.62} & \textbf{86.96 \scriptsize{$\pm$ 0.58}} & 5.67 \scriptsize{$\pm$ 4.96} \\
GATv2 & \textbf{84.87 \scriptsize{$\pm$ 1.38}} & 81.19 \scriptsize{$\pm$ 0.71} & 79.62 \scriptsize{$\pm$ 1.22} & 77.35 \scriptsize{$\pm$ 1.00} & \textbf{86.83 \scriptsize{$\pm$ 1.03}} & 82.15 \scriptsize{$\pm$ 0.77} & 4.33 \scriptsize{$\pm$ 3.14} \\
GCN & 82.20 \scriptsize{$\pm$ 1.78} & 81.16 \scriptsize{$\pm$ 1.04} & 85.14 \scriptsize{$\pm$ 0.70} & 71.19 \scriptsize{$\pm$ 1.60} & \underline{86.40 \scriptsize{$\pm$ 1.18}} & \underline{85.14 \scriptsize{$\pm$ 0.62}} & 4.33 \scriptsize{$\pm$ 2.75} \\
GCNII & 84.03 \scriptsize{$\pm$ 0.46} & 78.84 \scriptsize{$\pm$ 0.35} & 82.22 \scriptsize{$\pm$ 0.28} & \underline{79.31 \scriptsize{$\pm$ 0.49}} & 85.00 \scriptsize{$\pm$ 0.48} & 84.94 \scriptsize{$\pm$ 0.18} & 4.33 \scriptsize{$\pm$ 2.05} \\
\bottomrule
\end{tabular}
\end{table}
\begin{table}[]
\setlength{\tabcolsep}{2.4pt}
\renewcommand{\arraystretch}{1.2}
\caption{AUC for the middle third of the spectrum (multiplied by 10)}
\label{table:middle-third}
\footnotesize
\begin{tabular}{c|c|c|c|c|c|c|>{\columncolor[gray]{0.9}}c}
\toprule
\textbf{Model}       & \textbf{Computers}     & \textbf{Cora}    & \textbf{CS}      & \textbf{DBLP}    & \textbf{Photo}   & \textbf{Physics}       & \textbf{Avg. Ranking} \\ \midrule
GCN & 50.01 \scriptsize{$\pm$ 0.18} & 38.00 \scriptsize{$\pm$ 0.60} & 39.81 \scriptsize{$\pm$ 0.73} & 42.59 \scriptsize{$\pm$ 0.14} & 49.91 \scriptsize{$\pm$ 0.23} & 43.15 \scriptsize{$\pm$ 0.69} & 13.33 \scriptsize{$\pm$ 0.75} \\
SGC & 48.35 \scriptsize{$\pm$ 0.17} & 38.90 \scriptsize{$\pm$ 0.50} & 38.87 \scriptsize{$\pm$ 0.67} & 43.13 \scriptsize{$\pm$ 0.07} & 49.84 \scriptsize{$\pm$ 0.16} & 43.69 \scriptsize{$\pm$ 0.46} & 13.33 \scriptsize{$\pm$ 0.47} \\
APPNP & 51.00 \scriptsize{$\pm$ 0.04} & 47.29 \scriptsize{$\pm$ 0.25} & 46.41 \scriptsize{$\pm$ 0.20} & 49.46 \scriptsize{$\pm$ 0.03} & 49.45 \scriptsize{$\pm$ 0.03} & 47.04 \scriptsize{$\pm$ 0.12} & 10.83 \scriptsize{$\pm$ 1.57} \\
GIN & 50.75 \scriptsize{$\pm$ 0.01} & 47.55 \scriptsize{$\pm$ 0.75} & 44.88 \scriptsize{$\pm$ 0.46} & 46.85 \scriptsize{$\pm$ 0.93} & 51.16 \scriptsize{$\pm$ 0.01} & 46.81 \scriptsize{$\pm$ 0.58} & 10.83 \scriptsize{$\pm$ 1.46} \\
FA & 50.96 \scriptsize{$\pm$ 0.04} & 46.16 \scriptsize{$\pm$ 0.57} & 43.05 \scriptsize{$\pm$ 0.58} & 48.27 \scriptsize{$\pm$ 0.12} & 50.89 \scriptsize{$\pm$ 0.13} & 47.11 \scriptsize{$\pm$ 0.53} & 10.83 \scriptsize{$\pm$ 1.07} \\
Graph & 51.23 \scriptsize{$\pm$ 0.00} & 49.92 \scriptsize{$\pm$ 0.56} & 48.64 \scriptsize{$\pm$ 0.66} & 48.48 \scriptsize{$\pm$ 0.34} & 50.31 \scriptsize{$\pm$ 0.01} & 49.30 \scriptsize{$\pm$ 0.45} & 8.17 \scriptsize{$\pm$ 2.03} \\
GatedGraph & 51.38 \scriptsize{$\pm$ 0.01} & \underline{51.41 \scriptsize{$\pm$ 0.27}} & 49.43 \scriptsize{$\pm$ 0.10} & 49.57 \scriptsize{$\pm$ 0.04} & 50.05 \scriptsize{$\pm$ 0.04} & 49.09 \scriptsize{$\pm$ 0.03} & 7.33 \scriptsize{$\pm$ 2.75} \\
GAT & 51.19 \scriptsize{$\pm$ 0.05} & 49.21 \scriptsize{$\pm$ 0.45} & 49.28 \scriptsize{$\pm$ 0.08} & 49.89 \scriptsize{$\pm$ 0.10} & 51.81 \scriptsize{$\pm$ 0.19} & 50.44 \scriptsize{$\pm$ 0.11} & 7.17 \scriptsize{$\pm$ 1.07} \\
GPS & 51.39 \scriptsize{$\pm$ 0.07} & 48.97 \scriptsize{$\pm$ 0.06} & 50.09 \scriptsize{$\pm$ 0.14} & 50.15 \scriptsize{$\pm$ 0.02} & 51.82 \scriptsize{$\pm$ 0.07} & 52.05 \scriptsize{$\pm$ 0.30} & 5.33 \scriptsize{$\pm$ 1.37} \\
GATv2 & 52.20 \scriptsize{$\pm$ 0.10} & \textbf{51.51 \scriptsize{$\pm$ 0.96}} & 49.90 \scriptsize{$\pm$ 0.15} & 50.43 \scriptsize{$\pm$ 0.37} & 51.39 \scriptsize{$\pm$ 0.26} & 51.03 \scriptsize{$\pm$ 0.19} & 4.50 \scriptsize{$\pm$ 1.98} \\
GCNII & \textbf{54.93 \scriptsize{$\pm$ 1.17}} & 48.15 \scriptsize{$\pm$ 2.10} & 49.69 \scriptsize{$\pm$ 2.22} & 49.64 \scriptsize{$\pm$ 1.99} & \textbf{55.50 \scriptsize{$\pm$ 1.66}} & \textbf{53.51 \scriptsize{$\pm$ 2.23}} & 4.17 \scriptsize{$\pm$ 3.29} \\
Transformer & 52.92 \scriptsize{$\pm$ 0.16} & 49.68 \scriptsize{$\pm$ 0.17} & 50.10 \scriptsize{$\pm$ 0.17} & 50.22 \scriptsize{$\pm$ 0.17} & 52.21 \scriptsize{$\pm$ 0.24} & 51.65 \scriptsize{$\pm$ 0.19} & 4.17 \scriptsize{$\pm$ 0.69} \\
SAGE & 53.24 \scriptsize{$\pm$ 0.14} & 49.37 \scriptsize{$\pm$ 0.23} & \underline{51.40 \scriptsize{$\pm$ 0.44}} & \textbf{51.16 \scriptsize{$\pm$ 0.31}} & 53.45 \scriptsize{$\pm$ 0.08} & 52.31 \scriptsize{$\pm$ 0.63} & 3.00 \scriptsize{$\pm$ 1.53} \\
Cheb & \underline{53.25 \scriptsize{$\pm$ 0.06}} & 49.97 \scriptsize{$\pm$ 0.05} & \textbf{52.06 \scriptsize{$\pm$ 0.31}} & \underline{50.70 \scriptsize{$\pm$ 0.11}} & \underline{54.92 \scriptsize{$\pm$ 0.20}} & \underline{52.96 \scriptsize{$\pm$ 0.49}} & 2.00 \scriptsize{$\pm$ 0.58} \\
\bottomrule
\end{tabular}
\end{table}

\begin{table}[]
\setlength{\tabcolsep}{2.4pt}
\renewcommand{\arraystretch}{1.2}
\caption{AUC for the upper third of the spectrum (multiplied by 10)}
\label{table:upper-third}
\footnotesize
\begin{tabular}{c|c|c|c|c|c|c|>{\columncolor[gray]{0.9}}c}
\toprule
\textbf{Model}       & \textbf{Computers}      & \textbf{Cora}       & \textbf{CS}        & \textbf{DBLP}         & \textbf{Photo}     & \textbf{Physics}   & \textbf{Avg. Ranking} \\ \midrule
FA & 55.00 \scriptsize{$\pm$ 0.07} & 44.14 \scriptsize{$\pm$ 0.10} & 39.53 \scriptsize{$\pm$ 0.05} & 46.36 \scriptsize{$\pm$ 0.02} & 58.19 \scriptsize{$\pm$ 0.20} & 46.85 \scriptsize{$\pm$ 0.13} & 13.17 \scriptsize{$\pm$ 1.21} \\
APPNP & 54.34 \scriptsize{$\pm$ 0.05} & 46.59 \scriptsize{$\pm$ 0.09} & 43.01 \scriptsize{$\pm$ 0.05} & 48.53 \scriptsize{$\pm$ 0.02} & 54.21 \scriptsize{$\pm$ 0.20} & 46.87 \scriptsize{$\pm$ 0.03} & 12.83 \scriptsize{$\pm$ 0.69} \\
GIN & 53.92 \scriptsize{$\pm$ 0.07} & 55.60 \scriptsize{$\pm$ 0.68} & 48.87 \scriptsize{$\pm$ 0.11} & 51.99 \scriptsize{$\pm$ 0.50} & 59.39 \scriptsize{$\pm$ 0.26} & 51.05 \scriptsize{$\pm$ 0.13} & 11.67 \scriptsize{$\pm$ 0.75} \\
GAT & 57.66 \scriptsize{$\pm$ 0.27} & 54.30 \scriptsize{$\pm$ 0.19} & 52.10 \scriptsize{$\pm$ 0.08} & 52.50 \scriptsize{$\pm$ 0.10} & 61.90 \scriptsize{$\pm$ 0.54} & 53.60 \scriptsize{$\pm$ 0.10} & 10.33 \scriptsize{$\pm$ 0.94} \\
SGC & 59.17 \scriptsize{$\pm$ 0.19} & 67.54 \scriptsize{$\pm$ 0.66} & 57.33 \scriptsize{$\pm$ 0.32} & 57.60 \scriptsize{$\pm$ 0.61} & 63.67 \scriptsize{$\pm$ 0.41} & 52.44 \scriptsize{$\pm$ 0.11} & 7.50 \scriptsize{$\pm$ 2.22} \\
GATv2 & 58.54 \scriptsize{$\pm$ 0.22} & 59.04 \scriptsize{$\pm$ 0.42} & 54.45 \scriptsize{$\pm$ 0.15} & 52.92 \scriptsize{$\pm$ 0.22} & 64.30 \scriptsize{$\pm$ 0.68} & 57.34 \scriptsize{$\pm$ 0.19} & 7.50 \scriptsize{$\pm$ 1.50} \\
GPS & 59.38 \scriptsize{$\pm$ 0.28} & 58.83 \scriptsize{$\pm$ 0.08} & 55.20 \scriptsize{$\pm$ 0.06} & 59.79 \scriptsize{$\pm$ 0.20} & 64.09 \scriptsize{$\pm$ 0.37} & 60.00 \scriptsize{$\pm$ 0.02} & 6.83 \scriptsize{$\pm$ 1.95} \\
Cheb & 61.12 \scriptsize{$\pm$ 0.20} & 60.25 \scriptsize{$\pm$ 0.16} & 53.97 \scriptsize{$\pm$ 0.07} & 62.13 \scriptsize{$\pm$ 0.30} & 64.24 \scriptsize{$\pm$ 0.35} & 56.31 \scriptsize{$\pm$ 0.03} & 6.67 \scriptsize{$\pm$ 1.89} \\
Graph & 52.01 \scriptsize{$\pm$ 0.01} & \underline{73.94 \scriptsize{$\pm$ 0.35}} & \textbf{67.27 \scriptsize{$\pm$ 0.29}} & \textbf{68.78 \scriptsize{$\pm$ 0.42}} & 55.58 \scriptsize{$\pm$ 0.18} & \underline{62.32 \scriptsize{$\pm$ 0.05}} & 5.50 \scriptsize{$\pm$ 5.68} \\
GCNII & 63.25 \scriptsize{$\pm$ 0.35} & 57.26 \scriptsize{$\pm$ 0.98} & 59.58 \scriptsize{$\pm$ 0.35} & 48.46 \scriptsize{$\pm$ 0.68} & \textbf{68.86 \scriptsize{$\pm$ 0.53}} & \textbf{63.84 \scriptsize{$\pm$ 0.09}} & 5.17 \scriptsize{$\pm$ 4.63} \\
GCN & 60.72 \scriptsize{$\pm$ 0.19} & 70.06 \scriptsize{$\pm$ 0.62} & 58.13 \scriptsize{$\pm$ 0.36} & 59.84 \scriptsize{$\pm$ 0.60} & 65.17 \scriptsize{$\pm$ 0.41} & 53.84 \scriptsize{$\pm$ 0.16} & 5.17 \scriptsize{$\pm$ 1.57} \\
GatedGraph & 57.80 \scriptsize{$\pm$ 0.16} & \textbf{75.86 \scriptsize{$\pm$ 0.92}} & \underline{66.80 \scriptsize{$\pm$ 0.91}} & \underline{68.59 \scriptsize{$\pm$ 1.66}} & 64.67 \scriptsize{$\pm$ 0.53} & 53.22 \scriptsize{$\pm$ 0.12} & 4.83 \scriptsize{$\pm$ 3.53} \\
Transformer & \textbf{64.40 \scriptsize{$\pm$ 0.37}} & 65.71 \scriptsize{$\pm$ 0.50} & 56.23 \scriptsize{$\pm$ 0.23} & 64.16 \scriptsize{$\pm$ 0.28} & \underline{67.64 \scriptsize{$\pm$ 0.30}} & 57.22 \scriptsize{$\pm$ 0.17} & 4.33 \scriptsize{$\pm$ 2.21} \\
SAGE & \underline{63.54 \scriptsize{$\pm$ 0.32}} & 66.98 \scriptsize{$\pm$ 0.22} & 59.13 \scriptsize{$\pm$ 0.10} & 65.33 \scriptsize{$\pm$ 0.27} & 66.22 \scriptsize{$\pm$ 0.39} & 59.82 \scriptsize{$\pm$ 0.05} & 3.50 \scriptsize{$\pm$ 0.96} \\
\bottomrule
\end{tabular}
\end{table}

To more accurately study the performance of each GNN under our benchmark, we report the average GNN performance in capturing task-related information on different spectrum component areas. Specifically, Tables \ref{table:lower-third}, \ref{table:middle-third}, and \ref{table:upper-third} report average GNN performance in low, middle, high spectrum component areas. All metrics are multiplied by 100 for readability. We also \textbf{bolden} the highest metrics and \underline{underline} the second-best metrics. Ties are broken by lower standard error.


In the \textit{low-frequency range}, nearly all models perform well, but GatedGraph, GATv2, and GIN achieve particularly high accuracy, reflecting their strong low-pass filtering capabilities. For the \textit{mid-frequency range}, ChebNet and GCNII demonstrate superior performance, while models like GATv2 and Transformer also perform consistently across multiple datasets. However, surprisingly, models such as FA and APPNP begin to show weaknesses in this region, with their performance dropping notably. In the \textit{high-frequency domain}, GCNII once again leads the pack in Photo and Physics as well as the 1-WL graph operator while models such as GAT and FA struggle to maintain competitive results. Interestingly, SGC performs better than expected in the upper spectrum, despite being designed as a simplified GCN variant.

Overall, the trend shows that while most models are capable in the lower spectral ranges, only a few---like GCNII and Graph---exhibit balanced performance across the entire spectrum, particularly excelling in adapting to higher frequencies.
\section{Complementary Results for Parameter Study}\label{param_appendix}
To demonstrate the robustness of our results, we run experiments under our benchmark settings using wider and deeper GNNs. Specifically, we re-run our benchmark using twice as many hidden dimensions (128 versus 64) and varying layer depths (up to 4) in order to test whether these hyperparameters can significantly affect each GNN's frequency adaptation abilities. In Figure~\ref{fig:parameter-studies}, we present parameter studies across co-author datasets. In all of our ablations, we observe no significant shift in overall frequency adaptation behavior from the original plots in \ref{fig:main-benchmark}, suggesting that layer depth and width are not enough to improve or worsen GNN frequency adaptation capabilities.

\begin{figure*}[h]
\centering
    \begin{subfigure}{1\textwidth}
        \centering
        \includegraphics[width=\linewidth]{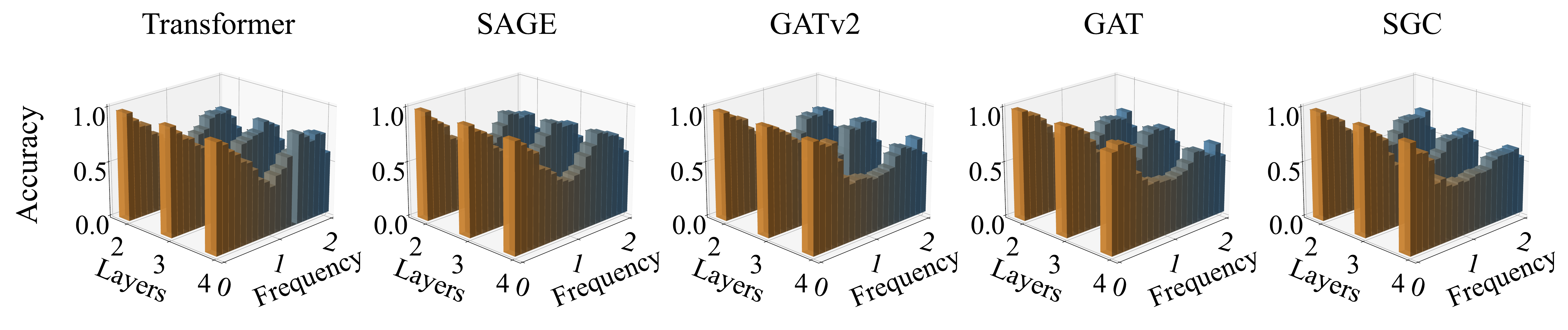} 
        \caption{Parameter study on Photo}
        \label{fig:photo-param}
    \end{subfigure}

    \vspace{-5mm}
    
    \begin{subfigure}{1\textwidth}
        \centering
        \includegraphics[width=\linewidth]{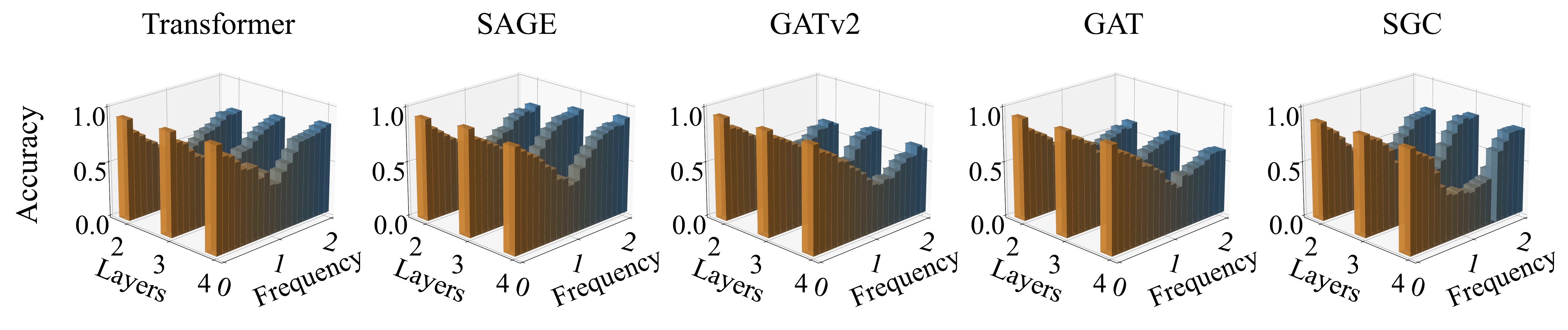} 
        \caption{Parameter study on DBLP}
        \label{fig:dblp-param}
    \end{subfigure}

    \vspace{-5mm}
    
    \begin{subfigure}{1\textwidth}
        \centering
        \includegraphics[width=\linewidth]{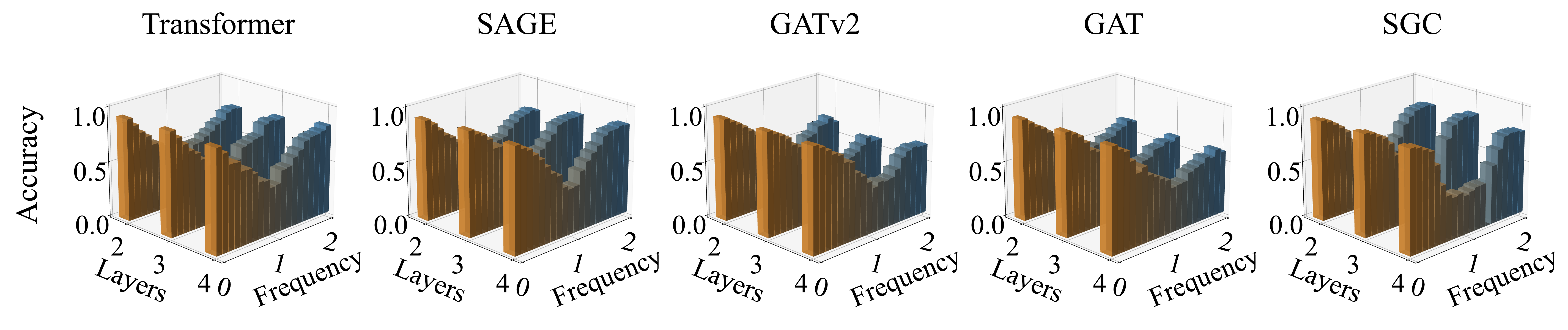} 
        \label{fig:cora-param}
    \end{subfigure}

\caption{Parameter studies for Photo (top row), DBLP (moddle row) and Cora\_full (bottom row), where the each subplot is a GNN's accuracy spectral tendency curve on a certain graph dataset with various (2 / 3 / 4) GNN layers .}
\label{fig:parameter-studies}
\end{figure*}

\end{document}